\theoremstyle{plain}
\newtheorem{theorem}{Theorem}[section]
\newtheorem{proposition}[theorem]{Proposition}
\newtheorem{lemma}[theorem]{Lemma}
\newtheorem{corollary}[theorem]{Corollary}
\theoremstyle{definition}
\newtheorem{assumption}[theorem]{Assumption}
\theoremstyle{remark}
\crefname{theorem}{Theorem}{Theorems}
\crefname{assumption}{Assumption}{Assumptions}
\icmltitlerunning{Importance Weighted Kernel Bayes' Rule}
\begin{document}

\twocolumn[
\icmltitle{Importance Weighting Approach in Kernel Bayes' Rule}



\icmlsetsymbol{equal}{*}

\begin{icmlauthorlist}
\icmlauthor{Liyuan Xu}{g}
\icmlauthor{Yutian Chen}{d}
\icmlauthor{Arnaud Doucet}{d}
\icmlauthor{Arthur Gretton}{g}

\end{icmlauthorlist}

\icmlaffiliation{g}{Gatsby Unit}
\icmlaffiliation{d}{DeepMind}

\icmlcorrespondingauthor{Liyuan Xu}{liyuan.jo.19@ucl.ac.uk}

\icmlkeywords{Kernel Methods, Approximate Bayesian Inference}

\vskip 0.3in
]
\printAffiliationsAndNotice{}

\begin{abstract}

We study a nonparametric approach to Bayesian computation via feature means, where the expectation of prior features is updated to yield expected kernel posterior features, based on regression from learned neural net or  kernel  features of the observations. All quantities involved in the Bayesian update are learned from observed data, making the method entirely model-free. The resulting algorithm is a novel instance of a kernel Bayes' rule (KBR),  based on importance weighting. This results in superior numerical stability to the original approach to KBR, which requires operator inversion.  We show the convergence of the estimator using a novel consistency analysis on the importance weighting estimator in the infinity norm. We evaluate  KBR on challenging synthetic benchmarks, including a filtering problem with a state-space model involving high dimensional image observations. Importance weighted KBR  yields uniformly better empirical performance than the original KBR, and competitive performance with other competing methods.

\end{abstract}
\section{Introduction} \label{sec:introduction}
Many machine learning applications are reduced to the problem of inferring latent variables in probabilistic models.
This is achieved using Bayes' rule, where a prior distribution over the latent variables is updated to obtain the posterior distribution, based on a likelihood function.  Probabilities involved in the Bayes' update have generally been expressed as probability density functions.
For many interesting problems, the exact posterior density is intractable; in the event that the likelihood is known,
we may use approximate inference techniques, including \emph{Markov Chain Monte Carlo (MCMC)} and \emph{Variational inference (VI)}.
When the likelihood function is unknown, however, and only samples drawn from it can be obtained, these methods do not apply.


We propose to use a {\em kernel mean embedding} representation of our probability distributions as the key quantity in our Bayesian updates \citep{SmoGreSonSch07},
which will enable nonparametric inference without likelihood functions.
Kernel mean embeddings characterize probability distributions as expectations of features in a reproducing kernel Hilbert space (RKHS). When the space is {\em characteristic},
these embeddings are injective \citep{Fukumizu2009KME,Sriperumbudur2010KME}. Expectations of RKHS functions may be expressed as dot products with the corresponding mean embeddings. Kernel mean embeddings have been employed extensively in nonparametric hypothesis testing \citep{Gretton2012MMD,ChwStrGre16}, however they have also been used in supervised learning for distribution-valued inputs
\citep{MuaFukDinSch12,SzaSriPocGre16}, and in various other inference settings \citep{Song2009CME,GruLevBalPatetal12,Boots2013PSR,JitGreHeeElsetal15,Singh2019KIV,MuaKanSaeMar21}.

We will focus here on the  \emph{Kernel Bayes' Rule (KBR)}, where a prior mean embedding is taken as input and is updated to return the mean embedding of the posterior. The goal is to express all quantities involved in the KBR updates as RKHS functions learned from observed data:  parametric models should not be required at any stage in the computation. We will address in particular the application of kernel Bayes' rule to filtering, where a latent state evolves stochastically over time according to Markovian dynamics, with noisy, high dimensional observations providing information as to the current state: the task is to construct a posterior distribution over the current state from the sequence of noisy  observations up to the present. In the event that a parametric model is known for the latent dynamics, then filtering can be achieved either by sampling from the model \cite{KanNisGreFuk16}, or through closed-form updates \cite{NisKanGreFuk20}, however, we propose here to make no modeling assumptions concerning the latent dynamics, beyond observing them in a training stage.   An alternative to kernel Bayes updates in the filtering setting was proposed by \citet{jasra2012filtering} but, for high-dimensional observations, it requires introducing summary statistics which have a high impact on performance and are difficult to select.

The first formulation of Kernel Bayes' Rule, due to  \citet{Fukumizu2013KBR}, yields the desired model-free, sample-derived posterior embedding estimates, and a filtering procedure that is likewise nonparametric and model-free.   At a high level, the resulting posterior is obtained via a two-stage least squares procedure, where the output of the first regression stage
is used as input to the second stage. Unfortunately, the original KBR formulation requires an unconventional form of regularization in the second stage, which
adversely impacts the attainable convergence rates (see Section \ref{subsec:convergence-analysis}), and decreases performance in practice. Following this work, \citet{Yang2016KBR2} proposed ``thresholding regularization,'' which avoids the need for this problematic regularization by thresholding the weights of first regression stage at zero. Although this regularization exhibits superior performance in the empirical evaluations, the theoretical analysis is confined to the case when the latent is discrete, and the convergence rate is not explicitly derived.

In the present work, we introduce {\em importance-weighted KBR (IW-KBR)}, a novel design for a KBR which does not require the problematic second-stage regularization.  The core idea of our approach is to use importance weighting, rather than two-stage regression, to achieve the required Bayesian update. This generalizes the ``thresholding regularization'' \citep{Yang2016KBR2}, which can be interpreted as IW-KBR with the importance weights estimated by KuLSIF estimator \citep{Kanamori2012KuLSIF}.
 We provide a  convergence analysis in a general probability space, requiring as an intermediate step a novel analysis on 
 KuLSIF estimator \citep{Kanamori2012KuLSIF} in infinity norm, which may be of independence interest.  
 Our IW-KBR improves on the convergence rate of the original KBR under certain conditions. As an algorithmic contribution, we introduce adaptive  neural network features into IW-KBR, which is essential when the observations are images or other  complex objects. In experiments, IW-KBR outperforms the original KBR  across all benchmarks considered, including filtering problems with state-space models involving high dimensional image observations.

The paper is structured as follows. In \cref{sec:preliminary}, we introduce the basic concepts of kernel methods and review the original KBR approach. We will also introduce density ratio estimators, which are used in estimating the importance weights.  Then, in \cref{sec:proposed-method} we introduce the proposed KBR approach using kernel and neural net features and provide convergence guarantees when kernel features are employed. We describe the \emph{Kernel Bayes Filter (KBF)} in \cref{sec:kbf}, which applies KBR to the filtering problem in a state-space model. We demonstrate the empirical performance in \cref{sec:experiment}, covering three settings: non-parametric Bayesian inference,  low-dimensional synthetic settings introduced in \citet{Fukumizu2013KBR}, and challenging KBF settings where the observations consist of high-dimensional images. 

\section{Background and preliminaries} \label{sec:preliminary}
In this section, we introduce kernel mean embeddings, which represent probability distributions by  expected RKHS features. We then review Kernel Bayes' Rule (KBR) \citep{Fukumizu2013KBR}, which aims to learn a mean posterior embedding according to Bayes' rule.


{\bf Kernel mean embeddings:} Let $(X,Z)$ be  random variables on $\mathcal{X} \times \mathcal{Z}$ with distribution $P$ and density function $p(x,z)$, and $k_\calX: \calX \times \calX \to \mathbb{R}$ and $k_\calZ: \calZ \times \calZ \to \mathbb{R}$ be measurable positive definite kernels corresponding to the scalar-valued RKHSs $\calH_\calX$ and $\calH_\calZ$, respectively. We denote the feature maps as $\psi(x) = k_\calX(x, \cdot)$ and $\phi(z) = k_\calZ(z, \cdot)$, and RKHS norm as $\|\cdot\|$.

The kernel mean embedding $m_{P(X)}$ of a marginal distribution $P(X)$ is defined as 
\begin{align*}
    m_{P(X)} = \expect[P]{\psi(X)} \in \calH_\calX,
\end{align*}
and always exists for bounded kernels. From the reproducing property, we have $\braket{f, m_{P(X)}} = \expect[P]{f(X)}$ for all $f\in\calH_\calX$, which is useful to estimate the expectations of functions. Furthermore, it is known that the embedding uniquely defines the probability distribution (i.e. $m_{P(X)} = m_{Q(X)}$ implies $P=Q$) if kernel $k_\calX$ is \emph{characteristic}: for instance, the Gaussian kernel has this property \cite{Fukumizu2009KME,Sriperumbudur2010KME}.
In addition to kernel means, we will require kernel covariance operators, 
\begin{alignat*}{3}
    &\covPXX\!=\!\expect[P]{\psi(X)\!\otimes\!\psi(X)}, &~~~& \covPXZ \!=\! \expect[P]{\psi(X)\!\otimes\!\phi(Z)},\\
    &\covPZX\!=\!\left(\covPXZ\right)^*, &&\covPZZ =  \expect[P]{\phi(Z)\!\otimes\!\phi(Z)},
\end{alignat*}
 where $\otimes$ is the tensor product such that $[a\otimes b]c= a \braket{b, c},$   and ${}^*$ denotes the adjoint of the operator.
Covariance operators generalize finite-dimensional covariance matrices to
the case of infinite kernel feature spaces, and always exist for bounded kernels.

The \emph{kernel conditional mean embedding} \citep{Song2009CME} is the extension of the kernel mean embedding to conditional probability distributions, and is defined as 
\begin{align*}
    m_{P(Z|X)}(x) = \expect[P]{\phi(Z)|X=x} \in \calH_\calZ.
\end{align*}
 Under the regularity condition $\expect[P]{g(Z)|X=x}\in \calH_\calX$ for all $g\in\calH_\calZ,$ there exists an operator $\EP \in \calH_\calX \otimes \calH_\calZ$ such that $m_{P(Z|X)}(x) = \left(\EP\right)^* \psi(x)$ \citep{Song2009CME,GruLevBalPatetal12,Singh2019KIV}.
The   \emph{conditional operator} $\EP$  can be expressed as the minimizer of the surrogate regression loss $\lossP$, defined as
\begin{align*}
    \lossP(E) = \expect[P]{\|\phi(Z) - E^* \psi(X)\|^2}.
\end{align*}
This minimization can be solved analytically, and the closed-form solution is given as
\begin{align*}
    \EP = (\covPXX)^{-1}\covPXZ.
\end{align*}

An empirical estimate of the conditional mean embedding is straightforward. Given i.i.d samples $\{(x_i, z_i)\}_{i=1}^n \sim P$, we minimize $\hatlossP,$ defined as 
\begin{align*}
    \hatlossP(E) = \frac1n \sum_{i=1}^n \|\phi(z_i) - E^* \psi(x_i)\|^2 + \lambda \|E\|^2,
\end{align*}
which is the sample estimate of $\lossP$  with added Tikhonov regularization, and the norm $\|E\|$ is the Hilbert-Schmidt norm. The solution of this minimization problem is
\begin{align*}
   \hatregEP =  (\hatcovPXX + \lambda I)^{-1}\hatcovPXZ. 
\end{align*}
Here, we denote $\hatcovPZX, \hatcovPXX, \hatcovPXZ, \hatcovPZZ$ as the empirical estimates of the covariance operators $\covPZX, \covPXX, \covPXZ, \covPZZ$, respectively; e.g. $\hatcovPZX = \frac1n \sum_{i=1}^n \phi(z_i) \otimes \psi(x_i)$.


{\bf Kernel Bayes' Rule:} Based on conditional mean embedding, \citet{Fukumizu2013KBR} proposed a method to realize Bayes' rule in kernel mean embedding representation. In Bayes' rule, we aim to update the prior distribution $\Pi$ on latent variables $Z$, with the density function $\pi(z)$, to the posterior $Q(Z|X=\tilde x)$, where the joint density $q(z,x)$ of distribution $Q$ is given by \begin{align*}
    q(z,x) =\pi(z)g(x|z).
\end{align*}
Here, $g(x|z)$ is the \emph{likelihood} function and $\tilde x \in \calX$ is the \emph{conditioning point}. The density of $Q(Z|X=\tilde x)$ is often intractable, since it involves computing the  integral $\int q(z,x)\intd z$. Instead, KBR aims to update the embedding of $\Pi$, denoted as $m_\Pi$, to the embedding of the posterior $m_{Q(Z|X)}(\tilde x)$. \citet{Fukumizu2013KBR} show that such an update does not require the closed-form expression of likelihood function $g(x|z)$. Rather, KBR \emph{learns} the relations between latent and observable variables from the data $\{(x_i, z_i)\}_{i=1}^n \sim P,$ where the density of data distribution $P$ shares the same likelihood $p(z,x)=p(z)g(x|z)$. We require $P$ to share the  likelihood of $Q$, otherwise the relation between observations and latent cannot be learned.
We remark that KBR also applies to \emph{Approximate Bayesian Computation (ABC)} \citep{ABC1,ABC2}, in which the likelihood function is intractable, but we can simulate from it.

If $\pi(z) = p(z)$, we could simply estimate $m_{Q(Z|X)}(\tilde x)$ by the conditional mean embedding learned from  $\{(x_i, z_i)\}_{i=1}^n$, but we will generally require $\pi(z) \neq p(z),$ since the prior can be the result of another inference step (notably for filtering: see \cref{sec:kbf}). We
present the solution of \citet{Fukumizu2013KBR}  for this case.
Let $\covQZX, \covQXX, \covQZX, \covQXX$ be the covariance operators on distribution $Q$, defined similarly to the  covariance operators on $P$. Similarly, the conditional operator $\EQ: \calH_\calX\otimes\calH_\calZ$ satisfying $m_{Q(Z|X)}(x) = \EQ^* \psi(x)$ is a minimizer of the loss 
\begin{align*}
    \lossQ(E) = \expect[Q]{\|\phi(Z) - E^* \psi(X)\|^2}.
\end{align*}
Unlike in the conditional mean embedding case, however, we cannot directly minimize this loss, since data $\{x_i, z_i\}$ are not sampled from $Q$ . Instead, \citeauthor{Fukumizu2013KBR} uses the analytical form of $\EQ$:
\begin{align*}
    \EQ = \covQZX (\covQXX)^{-1},
\end{align*}
and replaces each operator with vector-valued kernel ridge regression estimates,
\begin{align*}
   &\hatcovorgQXZ\!=\!\sum_{i=1}^n \gamma_i\psi(x_i)\!\otimes\!\phi(z_i),\\
   &\hatcovorgQXX\!=\!\sum_{i=1}^n\gamma_i \psi(x_i)\!\otimes\!\psi(x_i),
\end{align*}
where each weight $\gamma_i$ is given as 
\begin{align}
    \gamma_i = \braket{\phi(z_i), \left(\hatcovPZZ + \eta I\right)^{-1}\hat{m}_\Pi}. \label{eq:original-gamma}
\end{align}
Here, $\eta$ is another Tikhonov regularization parameter. Although these estimators are consistent, $\hatcovorgQXX$ is not necessarily positive semi-definite since weight $\gamma_i$ can be negative. This causes instabilities when inverting the operator. \citeauthor{Fukumizu2013KBR} mitigate this by applying another type of Tikhonov regularization, yielding an alternative estimate of $\EQ$,
\begin{align}
     \hatregorgEQ = \hatcovorgQXX\left(\left(\hatcovorgQXX\right)^{2} + \lambda I\right)^{-1}\hatcovorgQXZ. \label{eq:original-kbr}
\end{align}



{\bf Density Ratio Estimation:} The core idea of our proposed approach is to use importance sampling to estimate $\lossQ$. To obtain the weights, we need to estimate the density ratio 
\begin{align*}
    r_0(z) = \pi(z) / p(z).
\end{align*}
We may use any density ratio estimator, 
as long as it is computable from data $\{z_i\}_{i=1}^n$ and the prior embedding estimator $\hat{m}_\Pi$. We focus here on the \emph{Kernel-Based unconstrained Least-Squares Importance Fitting (KuLSIF)} estimator \citep{Kanamori2012KuLSIF}, which is obtained by minimizing
\begin{align*}
    \tilde{r}_\eta = \argmin_{r\in\calH_\calZ} \frac12 \braket{r, \hatcovPZZ r} - \braket{r, \hat{m}_\Pi} + \frac{\eta}2\|r\|^2.
\end{align*}
The estimator $\hat{r}(z)$ is obtained by truncating the solution $\tilde r$ at zero: $\hat{r}(z) = \max(0, \tilde{r}_\eta(z))$. Interestingly, the KuLSIF estimator at data point $z_i$ can be written
\begin{align*}
    \hat{r}(z_i) = \max(0, \gamma_i),
\end{align*}
where $\gamma_i$ is the weight used in \eqref{eq:original-gamma}. \citet{Kanamori2012KuLSIF} developed a convergence analysis of this KuLSIF estimator in $\ell_2$-norm based on the bracketing entropy \citep{Cucker2001OnTM}. This analysis, however, is insufficient for the KBR case, and we will establish stronger convergence results under different assumptions in \cref{subsec:convergence-analysis}.

\section{Importance-Weighted KBR} \label{sec:proposed-method}
In this section, we introduce our \emph{importance-weighted KBR (IW-KBR)} approach and provide a convergence analysis. We also propose a method to learn adaptive features using neural networks so that the model can learn complex posterior distributions.

\subsection{Importance Weighted KBR} \label{subsec:IWKBR}
Our proposed method minimizes the loss $\lossQ$, which is estimated by the importance sampling. Using density ratio $r_0(z) = \pi(z) / p(z)$, the loss $\lossQ$ can be rewritten as 
\begin{align*}
    \lossQ(E) = \expect[P]{r_0(Z) \|\phi(Z) - E^*\psi(X)\|^2}.
\end{align*}
Hence, we can construct the empirical loss  with added Tikhonov regularization,
\begin{align}
    \hatlossQ(E) = \frac1n \sum_{i=1}^n \hat{r}_i  \|\phi(z_i) - E^*\psi(x_i)\|^2 + \lambda \|E\|^2, \label{eq:hatlossQ}
\end{align}
where $\hat{r}_i = \hat{r}(z_i)$ is a non-negative estimator of density ratio $r_0(z_i)$. Again, the minimizer of $\hatlossQ(E)$ can be obtained analytically as 
\begin{align}
     \hatregEQ = \left(\hatcovQXX + \lambda I\right)^{-1}\hatcovQXZ, \label{eq:proposed-kbr}
\end{align}
where
\begin{align*}
   \hatcovQXZ\!=\!\sum_{i=1}^n \frac{\hat{r}_i}{n} \psi(x_i)\!\otimes\!\phi(z_i), \hatcovQXX\!=\!\sum_{i=1}^n \frac{\hat{r}_i}{n} \psi(x_i)\!\otimes\!\psi(x_i).
\end{align*}
Note that $\hatcovQXX$ is always positive semi-definite since $\hat{r}_i$ is non-negative by definition. Using the KuLSIF estimator, this is the truncated weight $\hat{r}_i = \max(0, \gamma_i)$ described previously, which considers the same empirical estimator $\hatcovQXX$ as ``thresholding regularization'' \citep{Yang2016KBR2}.


Given estimated conditional operator $\hatregEQ$ in \eqref{eq:proposed-kbr}, we can estimate the conditional embedding $m_{Q(Z|X)}(x)$ as $\hat{m}_{Q(Z|X)}(x) = (\hatregEQ)^*\psi(x)$, as shown in \cref{alg:kbr}.  As illustrated, the posterior mean embedding $\hat{m}_{Q(Z|X)}$ is represented by the weighted sum over the same RKHS features $\phi(z) = k_\calZ(z, \cdot)$ used in the density ratio estimator $\hat{\vec{r}}$.
We remark that this need not be the case, however: the weights $\vec{w}$ could be used to obtain a posterior mean embedding over a {\em different} feature
space to that used in computing $\hat{\vec{r}}$, simply by substituting the desired $\tilde{\phi}(z_i)$ for $\phi(z_i)$ in the sum from the third step.
For example, we could use a Gaussian kernel to compute the density ratio $\hat{\vec{r}}$, and then a linear kernel $\tilde\phi(z) = z$ to estimate the posterior mean   $\hat{\mathbb{E}}_{Q}[Z|X]= \sum_i w_iz_i$.


The computational complexity of \cref{alg:kbr} is $O(n^3)$, which is the same complexity as the ordinary kernel ridge regression. This can be accelerated by using Random Fourier Features \citep{Rahimi2007RFF} or the  Nystr\"om Method \citep{Williams2001Nystrom}.

\begin{algorithm}[tb]
   \caption{Importance Weighted Kernel Bayes Rule}
   \label{alg:kbr}
\begin{algorithmic}[1]
\setlength\belowdisplayskip{2pt}
\setlength\abovedisplayskip{2pt}
   \REQUIRE Samples $\{(x_i, z_i)\}_{i=1}^n \sim P$, Estimated prior embedding $\hat{m}_\Pi$, regularization parameters $(\eta, \lambda)$, Conditioning point $\tilde x$.
   \STATE Compute Gram matrices $G_X, G_Z \in \mathbb{R}^{n\times n}.$
   \begin{align*} 
       (G_X)_{ij} = k_\calX(x_i, x_j), (G_Z)_{ij} = k_\calZ(z_i, z_j).
   \end{align*}
   \STATE Compute KuLSIF $\hat{\vec{r}} = (\hat{r}_1, \dots, \hat{r}_n) \in \mathbb{R}^n$ as 
   \begin{align*} 
       \hat{\vec{r}} = \max\left(0, n\left(G_Z + n\eta I\right)^{-1} \vec{g}_\Pi\right),
   \end{align*}
   where $\max$ operates element-wisely and 
   \begin{align*}
       (\vec{g}_\Pi)_i = \braket{\hat{m}_\Pi, \phi(z_i)}.
   \end{align*} 
   \STATE Estimate $\hat{m}_{Q(Z|X)}(x) = \sum_{i=1}^n w_i\phi(z_i)$, with weight $\vec{w} = (w_1, \dots, w_n)$ given as 
   \begin{align*}
       \vec{w} = \sqrt{D}\left(\sqrt{D}G_X\sqrt{D} + n\lambda I\right)^{-1} \sqrt{D}\vec{g}_{\tilde x},
   \end{align*}
   where $D = \mathrm{diag}(\hat{\vec{r}}) \in \mathbb{R}^{n\times n}$ and $(\vec{g}_{\tilde x})_i = k_\calX(x_i, \tilde x)$. 
\end{algorithmic}
\end{algorithm}

\subsection{Convergence Analysis} \label{subsec:convergence-analysis}
In this section, we analyze $\hatregEQ$ in \eqref{eq:proposed-kbr}. First, we state a few regularity assumptions.
\begin{assumption} \label{assum:bound-feature-norm}
    The kernels $k_\calX$ and $k_\calZ$ are continuous and bounded: $\|\phi(z)\| \leq \kappa_1< \infty, \|\psi(x)\| \leq \kappa_2 < \infty$ hold almost surely.
\end{assumption}
\begin{assumption}\label{assum:density-ratio}
    We have $r_0 \in \calH_\calZ$ and $\exists g_1 \in \calH_\calZ$ such that $r_0 = (\covPZZ)^{\beta_1}  g_1$ and $\|g_1\| \leq \zeta_1$ for $\beta_1 \in (0, 1/2], \zeta_1 < \infty$ given.
\end{assumption}
\begin{assumption} \label{assum:covariance-operator}
    $\exists \Gamma \in \calH_\calX \otimes \calH_\calZ $ such that $\EQ = (\covQXX)^{\beta_2} \Gamma$ and $\|\Gamma\| \leq \zeta_2$ for $\beta_2 \in (0, 1/2], \zeta_2 < \infty$ given.
\end{assumption}

\cref{assum:bound-feature-norm} is standard for mean embeddings, and many widely used kernel functions satisfy it, including the Gaussian kernel and the Mat\'{e}rn kernel. \cref{assum:density-ratio,assum:covariance-operator}  assure the smoothness of the density ratio $r_0$ and conditional operator $E_Q$, respectively. See \citep{Smale2007LT,CapDev07} for a discussion of the first assumption, and \citep[Hypothesis 5]{Singh2019KIV} for the second. We will further discuss the implication of \cref{assum:density-ratio} using the ratio of two Gaussian distributions in \cref{sec:dre-assume}.
Note that \cref{assum:density-ratio}, also used in  \citet{Fukumizu2013KBR}, should be treated with care. For example, $r_0 \in \calH_\calZ$ imposes $\sup_{z\in\calZ} r_0(z) \leq \|r_0\| \sup_{z\in\calZ} \|\phi(z)\| <\infty$, which is violated when we consider the ratio of two Gaussian distributions with the different means and the same variance. 

We now show that the KuLSIF estimator converges in infinity norm $\|\cdot\|_{\infty}$.
\begin{theorem} \label{thm:stage1-converge}
Suppose \cref{assum:bound-feature-norm,assum:density-ratio}. Given data $\{x_i, z_i\} \sim P$ and the estimated prior embedding $\hat{m}_\Pi$ such that $\|\hat{m}_\Pi - \expect[\Pi]{\phi(X)}\|\leq O_p(n^{-\alpha_1})$ for $\alpha_1 \in (0, 1/2]$, by setting $\eta = O(n^{-\frac{\alpha_1}{\beta_1 + 1}})$, we have
\begin{align*}
    \|r_0 - \hat{r}\|_{\infty} \leq O_p\left(n^{-\frac{\alpha_1 \beta_1}{\beta_1 + 1}}\right).
\end{align*}
\end{theorem}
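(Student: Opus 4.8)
The plan is to exploit the closed form of the KuLSIF estimator, reduce the supremum-norm error to a reproducing-kernel Hilbert space (RKHS) norm error, and then bound the latter by a bias--variance decomposition under the source condition of \cref{assum:density-ratio}.

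First I would record two elementary but essential facts. The first-order optimality condition for the unconstrained (pre-truncation) KuLSIF objective gives the closed form $\tilde r_\eta = (\hatcovPZZ + \eta I)^{-1}\hat m_\Pi$, and a change of measure shows that the population density ratio satisfies $\covPZZ r_0 = m_\Pi$, where $m_\Pi := \expect[\Pi]{\phi(Z)}$ is the true prior embedding, since $\covPZZ r_0 = \expect[P]{\phi(Z)r_0(Z)} = \expect[\Pi]{\phi(Z)}$. The key observation that forces the analysis into the supremum norm is the truncation step: because $r_0 \ge 0$ pointwise and $t \mapsto \max(0,t)$ is idempotent and $1$-Lipschitz, we obtain the pointwise contraction $|r_0(z) - \hat{r}(z)| = |\max(0,r_0(z)) - \max(0,\tilde r_\eta(z))| \le |r_0(z) - \tilde r_\eta(z)|$, hence $\|r_0 - \hat{r}\|_\infty \le \|r_0 - \tilde r_\eta\|_\infty$. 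This inequality has no clean RKHS-norm analogue (truncation need not be non-expansive, nor even stay in $\calH_\calZ$, in RKHS norm), which is precisely why the infinity norm is the natural object here. Finally, the reproducing property together with \cref{assum:bound-feature-norm} yields $\|f\|_\infty \le \kappa_1 \|f\|$ for every $f \in \calH_\calZ$, so it suffices to bound $\|r_0 - \tilde r_\eta\|$ in RKHS norm.

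For the RKHS-norm bound I would insert the two intermediaries $r_\eta := (\covPZZ + \eta I)^{-1}m_\Pi$ and $\bar r_\eta := (\hatcovPZZ + \eta I)^{-1}m_\Pi$ and split
\[
r_0 - \tilde r_\eta = \underbrace{(r_0 - r_\eta)}_{\text{bias}} + \underbrace{(r_\eta - \bar r_\eta)}_{\text{operator error}} + \underbrace{(\bar r_\eta - \tilde r_\eta)}_{\text{embedding error}}.
\]
The bias term simplifies to $\eta(\covPZZ + \eta I)^{-1}(\covPZZ)^{\beta_1}g_1$ using $\covPZZ r_0 = m_\Pi$ and \cref{assum:density-ratio}, and spectral calculus ($\sup_{\sigma\ge 0}\eta\sigma^{\beta_1}/(\sigma+\eta)\le \eta^{\beta_1}$) bounds it by $\zeta_1\eta^{\beta_1}$. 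The embedding error equals $(\hatcovPZZ+\eta I)^{-1}(m_\Pi - \hat m_\Pi)$, bounded by $\eta^{-1}\|m_\Pi - \hat m_\Pi\| = O_p(\eta^{-1}n^{-\alpha_1})$ via the assumed prior rate. The operator error I would handle with the resolvent identity, writing it as $(\covPZZ + \eta I)^{-1}(\hatcovPZZ - \covPZZ)\bar r_\eta$, so it is controlled by $\eta^{-1}\|\hatcovPZZ - \covPZZ\|\,\|\bar r_\eta\|$; here $\|\hatcovPZZ - \covPZZ\| = O_p(n^{-1/2})$ by standard concentration of empirical covariance operators for bounded kernels (\cref{assum:bound-feature-norm}), and $\|\bar r_\eta\| = O_p(1)$ after a further resolvent expansion (valid because the chosen $\eta$ satisfies $\eta \gtrsim n^{-1/2}$).

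Substituting $\eta = O(n^{-\alpha_1/(\beta_1+1)})$ balances the bias ($\eta^{\beta_1}=n^{-\alpha_1\beta_1/(\beta_1+1)}$) against the embedding error ($\eta^{-1}n^{-\alpha_1}=n^{-\alpha_1\beta_1/(\beta_1+1)}$), while the operator error $\eta^{-1}n^{-1/2}=n^{\alpha_1/(\beta_1+1)-1/2}$ is dominated because $\alpha_1 \le 1/2$ forces $\alpha_1/(\beta_1+1)-1/2 \le -\alpha_1\beta_1/(\beta_1+1)$; summing the three contributions gives the claimed $O_p(n^{-\alpha_1\beta_1/(\beta_1+1)})$. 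I expect the main obstacle to be twofold: establishing the pointwise contraction cleanly (the conceptual crux justifying the passage to $\tilde r_\eta$ and to the infinity norm), and, more technically, verifying that the operator-error term does not dominate --- that is, controlling $\|\bar r_\eta\|$ uniformly and checking that $\alpha_1 \le 1/2$ keeps this term below the target rate.
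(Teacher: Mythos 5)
Your proof is correct, and its overall skeleton coincides with the paper's: the truncation contraction $|\max(0,\tilde r_\eta(z))-r_0(z)|\le|\tilde r_\eta(z)-r_0(z)|$ (valid since $r_0\ge 0$), the passage $\|f\|_\infty\le\kappa_1\|f\|$ via the reproducing property, the population-regularized intermediary $r_\eta=(\covPZZ+\eta I)^{-1}m_\Pi$, the source-condition bias bound $\|r_0-r_\eta\|\le\eta^{\beta_1}\zeta_1$, and the final balancing of $\eta$ are all exactly the paper's steps. Where you genuinely diverge is in the sampling term: you insert a second intermediary $\bar r_\eta=(\hatcovPZZ+\eta I)^{-1}m_\Pi$ and control the operator-error piece by the resolvent identity, which forces you to invoke operator-norm concentration of $\hatcovPZZ-\covPZZ$ and to verify $\|\bar r_\eta\|=O_p(1)$ via a further expansion that needs $\eta\gtrsim n^{-1/2}$ (your check that $\alpha_1\le 1/2$ guarantees this, and that the resulting $\eta^{-1}n^{-1/2}$ term is dominated, is correct). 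The paper instead keeps a single term $\|\tilde r_\eta-r_\eta\|$, substitutes $\eta r_\eta=m_\Pi-\covPZZ r_\eta$ inside the resolvent, and applies the Smale--Zhou vector concentration (\cref{prop:concentration}) directly to $\xi=(\phi(Z)\otimes\phi(Z))r_\eta$, using the deterministic bound $\|r_\eta\|\le\|r_0\|$ (\cref{lem:stage1-norm}); this avoids operator-norm concentration and avoids bounding any random resolvent quantity, giving an explicit finite-sample constant in one pass. Both routes yield the same rate under the same condition $\alpha_1\le1/2$; yours is more modular and uses only standard off-the-shelf ingredients, while the paper's is leaner and produces a cleaner high-probability bound (\cref{thm:stage1-rkhs-converge}) with explicit constants.
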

The proof is given in \cref{subsec:proof}. This result differs from the analysis in \citet{Kanamori2012KuLSIF}, which establishes convergence of the KuLSIF estimator in $\ell_2$-norm, based on the bracketing entropy \citep{Cucker2001OnTM}.
Our result cannot be directly compared with theirs, due to the differences in the assumptions made and the norms used in measuring convergence (in particular, we
require convergence in infinity norm).
Establishing a relation between the two results represents an interesting research direction, though it is out of the scope of this paper.

\cref{thm:stage1-converge} can then be used to obtain the convergence rate of covariance operators.
\begin{corollary} \label{thm:stage1-converge-operator}
Given the same conditions in \cref{thm:stage1-converge},  by setting $\eta = O(n^{-\frac{\alpha_1}{\beta_1 + 1}})$, we have
\begin{align*}
    \|\hatcovQXX - \covQXX\|  \leq O_p\left(n^{-\frac{\alpha_1 \beta_1}{\beta_1 + 1}}\right).
\end{align*}
\end{corollary}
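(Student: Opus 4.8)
The plan is to reduce the bound on $\|\hatcovQXX - \covQXX\|$ to the infinity-norm control of the density ratio already established in \cref{thm:stage1-converge}, combined with a standard concentration argument for empirical averages of bounded random operators. The starting point is the importance-weighting identity underlying \cref{subsec:IWKBR}: since $Q$ and $P$ share the likelihood and $r_0 = \pi/p$, the true operator is a $P$-expectation, $\covQXX = \expect[P]{r_0(Z)\,\psi(X)\otimes\psi(X)}$, whereas the estimator is $\hatcovQXX = \frac1n\sum_{i=1}^n \hat{r}_i\,\psi(x_i)\otimes\psi(x_i)$. I would then introduce the intermediate operator $C_n = \frac1n\sum_{i=1}^n r_0(z_i)\,\psi(x_i)\otimes\psi(x_i)$, which plugs the \emph{true} ratio into the sampled points, and split by the triangle inequality $\|\hatcovQXX - \covQXX\| \le \|\hatcovQXX - C_n\| + \|C_n - \covQXX\|$.

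For the first (estimation) term I would write $\hatcovQXX - C_n = \frac1n\sum_i (\hat{r}_i - r_0(z_i))\,\psi(x_i)\otimes\psi(x_i)$ and bound it by $\frac1n\sum_i |\hat{r}_i - r_0(z_i)|\,\|\psi(x_i)\|^2 \le \kappa_2^2\,\|\hat{r} - r_0\|_\infty$, using that $\psi(x_i)\otimes\psi(x_i)$ is rank one with norm $\|\psi(x_i)\|^2 \le \kappa_2^2$ by \cref{assum:bound-feature-norm}. Applying \cref{thm:stage1-converge} directly yields $\|\hatcovQXX - C_n\| \le O_p(n^{-\frac{\alpha_1\beta_1}{\beta_1+1}})$. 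This is exactly the step for which infinity-norm (rather than $\ell_2$) control of $\hat r - r_0$ is essential, so the strength of \cref{thm:stage1-converge} is what makes the bound clean.

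For the second (concentration) term, $C_n - \covQXX$ is a centered average of i.i.d. random operators $r_0(z_i)\,\psi(x_i)\otimes\psi(x_i)$ with common mean $\covQXX$. Their norm is uniformly bounded because $r_0$ is bounded: by \cref{assum:density-ratio}, $\|r_0\| \le \|\covPZZ\|^{\beta_1}\zeta_1 \le \kappa_1^{2\beta_1}\zeta_1$, whence $r_0(z) = \braket{r_0,\phi(z)} \le \kappa_1^{2\beta_1+1}\zeta_1$ by \cref{assum:bound-feature-norm}, so each summand has norm at most $\kappa_1^{2\beta_1+1}\zeta_1\kappa_2^2$. A Hoeffding- or Bernstein-type inequality for bounded Hilbert-space-valued random variables then gives $\|C_n - \covQXX\| \le O_p(n^{-1/2})$.

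Finally I would combine the two rates. Since $\alpha_1,\beta_1\in(0,1/2]$ we always have $\frac{\alpha_1\beta_1}{\beta_1+1} \le \frac16 < \frac12$, so the $O_p(n^{-1/2})$ concentration term is dominated by the estimation term, and the overall rate is $O_p(n^{-\frac{\alpha_1\beta_1}{\beta_1+1}})$, as claimed. I do not anticipate a serious obstacle: the corollary is essentially a mechanical propagation of \cref{thm:stage1-converge} through a linear reweighting, and the only points demanding care are verifying the reweighting identity for $\covQXX$ and checking that $r_0$ is bounded so the concentration inequality applies. I should also state explicitly which operator norm is intended; the argument is identical for the operator and Hilbert--Schmidt norms, since all summands are rank one.
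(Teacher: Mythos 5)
Your proposal is correct and follows essentially the same route as the paper's own proof: the same intermediate operator $\frac1n\sum_i r_0(z_i)\,\psi(x_i)\otimes\psi(x_i)$, the same triangle-inequality split, the infinity-norm bound from \cref{thm:stage1-converge} for the reweighting term, and a Bernstein-type Hilbert-space concentration bound (the paper uses \cref{prop:concentration}) for the sampling term. The only cosmetic differences are that you bound $\sup_z r_0(z)$ via \cref{assum:density-ratio} rather than via $\kappa_1\|r_0\|$, and you make explicit the observation that the $O_p(n^{-1/2})$ term is dominated, which the paper leaves implicit.
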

 The  cross covariance operator $\hatcovQXZ$ also converges at the same rate.
This rate is slower than the original KBR estimator, however, which satisfies
\begin{align*}
    \|\hatcovorgQXX - \covQXX\| \leq O_p\left(n^{-\frac{\alpha_1 (2\beta_1 + 1)}{2\beta_1 + 2}}\right)
\end{align*}
 \citep{Fukumizu2013KBR}. This is inevitable since the original KBR estimator uses the optimal weights $\gamma_i$ to estimate $\covQXX$, while our estimator uses truncated weights $\hat{r}_i = \max(0, \gamma_i)$, which introduces a bias in the estimation.

Given our consistency result on  $\hatcovQXX, \hatcovQXZ$, we can show that the estimated conditional operators are also consistent.
\begin{theorem} \label{thm:stage2-converge}
Suppose \cref{assum:bound-feature-norm,assum:covariance-operator}. Given data $\{x_i, z_i\} \sim P$ and estimated covariance operators such that $\|\hatcovQXX - \covQXX\| \leq O_p(n^{-\alpha_2})$ and 
$\|\hatcovQXZ - \covQXZ\| \leq O_p(n^{-\alpha_2})$, by setting $\lambda = O(n^{-\frac{\alpha_2}{\beta_2 + 1}})$ we have 
\begin{align*}
    \|\hatregEQ - \EQ\| \leq O_p\left(n^{-\frac{\alpha_2\beta_2}{\beta_2 + 1}}\right).
\end{align*}
\end{theorem}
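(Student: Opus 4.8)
The plan is to run the standard bias--variance decomposition for Tikhonov-regularized operator regression, inserting an intermediate \emph{regularized population} operator $E_\lambda = (\covQXX + \lambda I)^{-1}\covQXZ$ and bounding, in Hilbert--Schmidt norm, $\|\hatregEQ - \EQ\| \leq \|\hatregEQ - E_\lambda\| + \|E_\lambda - \EQ\|$. The second term is the approximation (bias) error, controlled deterministically using the source condition, and the first is the estimation (variance) error, controlled using the two assumed operator-norm rates; I would bound each separately and then choose $\lambda$ to balance them.

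For the bias term, I would use the identity $\covQXZ = \covQXX \EQ$ (which holds since $\EQ = (\covQXX)^{-1}\covQXZ$) to rewrite $E_\lambda - \EQ = \left[(\covQXX + \lambda I)^{-1}\covQXX - I\right]\EQ = -\lambda (\covQXX + \lambda I)^{-1}\EQ$. Substituting the source condition $\EQ = (\covQXX)^{\beta_2}\Gamma$ from \cref{assum:covariance-operator} gives $\|E_\lambda - \EQ\| \leq \zeta_2\, \lambda\, \sup_{t \in [0,\kappa_2^2]} t^{\beta_2}/(t + \lambda)$, and a short calculus argument shows $\sup_t t^{\beta_2}/(t+\lambda) = C_{\beta_2}\lambda^{\beta_2 - 1}$ for $\beta_2 \in (0,1/2]$, yielding a bias bound of order $\lambda^{\beta_2}$.

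For the variance term, I would add and subtract $(\hatcovQXX + \lambda I)^{-1}\covQXZ$ and split $\hatregEQ - E_\lambda$ into $(\hatcovQXX + \lambda I)^{-1}(\hatcovQXZ - \covQXZ)$ plus $\left[(\hatcovQXX + \lambda I)^{-1} - (\covQXX + \lambda I)^{-1}\right]\covQXZ$. The first piece is bounded by $\lambda^{-1}\|\hatcovQXZ - \covQXZ\| = O_p(n^{-\alpha_2}/\lambda)$ via $\|(\hatcovQXX + \lambda I)^{-1}\| \leq 1/\lambda$. For the second piece I would apply the resolvent identity $A^{-1} - B^{-1} = A^{-1}(B - A)B^{-1}$ to rewrite it as $(\hatcovQXX + \lambda I)^{-1}(\covQXX - \hatcovQXX)(\covQXX + \lambda I)^{-1}\covQXZ$; the crucial step is to show $\|(\covQXX + \lambda I)^{-1}\covQXZ\|$ is bounded independently of $\lambda$, which follows from $\covQXZ = \covQXX(\covQXX)^{\beta_2}\Gamma$ together with $\|(\covQXX + \lambda I)^{-1}\covQXX\| \leq 1$ and \cref{assum:bound-feature-norm}, giving this piece a bound of order $\|\covQXX - \hatcovQXX\|/\lambda = O_p(n^{-\alpha_2}/\lambda)$ as well.

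Combining the pieces gives $\|\hatregEQ - \EQ\| \leq O(\lambda^{\beta_2}) + O_p(n^{-\alpha_2}/\lambda)$, and setting $\lambda = O(n^{-\alpha_2/(\beta_2+1)})$ balances the two contributions to produce the claimed rate $O_p(n^{-\alpha_2\beta_2/(\beta_2+1)})$. I expect the main obstacle to be the second variance sub-term: the resolvent difference naively loses two powers of $1/\lambda$, so the source condition must be used carefully to absorb one of them into the bounded factor $(\covQXX + \lambda I)^{-1}\covQXZ$. Keeping precise track of which factors are $O(1)$ versus $O(1/\lambda)$, and matching the power-function calculus in the bias term to the exponent $\beta_2$, is where the argument must be carried out with care.
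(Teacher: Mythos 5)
Your proposal is correct and follows essentially the same route as the paper's proof: the same decomposition through the regularized population operator $\regEQ = (\covQXX + \lambda I)^{-1}\covQXZ$, the same source-condition bias bound of order $\lambda^{\beta_2}$, and a variance bound that is algebraically identical to the paper's, since your resolvent-identity split recombines to $(\hatcovQXX + \lambda I)^{-1}\left[(\hatcovQXZ - \covQXZ) + (\covQXX - \hatcovQXX)\regEQ\right]$, which is exactly the identity the paper manipulates directly. The only cosmetic difference is that you bound $\|\regEQ\|$ via the source condition (giving $\kappa_2^{2\beta_2}\zeta_2$), whereas the paper invokes its lemma $\|\regEQ\| \le \|\EQ\|$; both serve the same purpose, and the final balancing $\lambda = O(n^{-\alpha_2/(\beta_2+1)})$ is identical.
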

The proof is  in \cref{subsec:proof}. This rate is faster than the original KBR estimator \citep{Fukumizu2013KBR}, which satisfies
\begin{align*}
    \|\hatregorgEQ - \EQ\| \leq O_p\left(n^{-\frac{2\alpha_2\beta_2}{2\beta_2 + 5}}\right).
\end{align*}
This is the benefit of avoiding the regularization of the form \eqref{eq:original-kbr} and using instead \eqref{eq:proposed-kbr}. Given \cref{thm:stage1-converge-operator,thm:stage2-converge}, we can thus show that
\begin{align*}
    \|\hatregEQ - \EQ\| \leq O_p\left(n^{-\frac{\alpha_1\beta_1\beta_2}{(\beta_1 + 1)(\beta_2+1)}}\right),
\end{align*}
while \citet{Fukumizu2013KBR} obtained
\begin{align*}
    \|\hatregorgEQ -  \EQ\| \leq O_p\left(n^{-\frac{\alpha_1(2\beta_1+1)2\beta_2}{(2\beta_1 + 2)(2\beta_2+5)}}\right).
\end{align*}
The approach that yields the better overall rate depends on the smoothness parameters $(\beta_1, \beta_2)$. Our approach converges faster than the original KBR when the density ratio $r_0$ is smooth (i.e. $\beta_1 \simeq 1/2$) and the conditional operator  less smooth (i.e. $\beta_2 \simeq 0$).
Note further that  \citet{Sugiyama2008} show, even when $r_0 \neq \calH_\calZ$, that the KuLSIF estimator $\hat{r}$ converges to the element in RKHS $\calH_\calZ$ with the least $\ell_2$ error. We conjecture that our method might thus be robust to misspecification of the density estimator,
although this remains a topic for future research (in particular, our proof requires consistency of the form in \cref{thm:stage1-converge}).

\subsection{Learning Adaptive Features in KBR} \label{subsec:adaptive-feature}

Although kernel methods benefit from strong theoretical guarantees, a restriction to RKHS features limits our scope and flexibility, since this requires pre-specified  feature maps. Empirically, poor performance can result in cases where the  observable  variables are high-dimensional (e.g. images), or have highly nonlinear relationships with the latents. Learned, adaptive neural network features have previously been used to substitute for kernel features when performing inference on mean embeddings in causal modeling \citep{xu2021learning,Xu2021Proxy}. Inspired by this work, we propose to employ adaptive NN observation features in KBR.

Recall that we learn the conditional operator $\hatregEQ$ by minimizing the loss $\hatlossQ$ defined in \eqref{eq:hatlossQ}. We propose to jointly learn the feature map $\psi_{\theta}$ with the conditional operator $\hatregEQ$ by minimizing the same loss,
\begin{align*}
    \hatlossQ(E, \theta) = \frac1n \sum_{i=1}^n \hat{r}_i  \|\phi(z_i) - E^*\psi_\theta(x_i)\|^2 + \lambda \|E\|^2,
\end{align*}
where $\psi_\theta: \calX \to \mathbb{R}^{d}$ is the $d$-dimensional adaptive feature represented by a neural network parameterized by $\theta$. As in the kernel feature case, the optimal operator can be obtained from  \eqref{eq:proposed-kbr} for a given value of $\theta$. From this, we can write the loss for $\theta$ as
\begin{align*}
    \ell(\theta) &= \argmin_E \hatlossQ(E, \theta) \\
    &= \mathrm{tr}\left(G_Z(D - D \Psi^\top_\theta(\Psi_\theta D \Psi_\theta^\top + \lambda I)^{-1}\Psi_\theta D)\right),
\end{align*}
where $G_Z$ is the Gram matrix $(G_Z)_{ij} = k_\calZ(z_i, z_j)$, $D$ is the diagonal matrix with estimated importance weights $D = \mathrm{diag}(\hat{r}_1, \dots, \hat{r}_n)$ and $\Psi_\theta \in \mathbb{R}^{n\times d}$ is the feature matrix defined as $\Psi_\theta = [\psi_\theta(x_1), \dots,  \psi_\theta(x_n)]$. See \cref{sec:prof} for the derivation. 

The loss $\ell(\theta)$ can be minimized using gradient based optimization methods. Given the learned parameter $\hat{\theta} = \argmin \ell(\theta)$ and the conditioning point $\tilde{x}$, we can estimate the posterior embedding $\hat{m}_{Q(Z|X)}(x) = \sum_{i=1}^n w_i\phi(z_i)$ with weights $\vec{w} = (w_1, \dots, w_n)$ given by
\begin{align*}
    \vec{w} = D \Psi^\top_{\hat\theta}\left(\Psi_{\hat\theta} D \Psi^\top_{\hat\theta} + \lambda I \right)^{-1} \psi_{\hat\theta}(\tilde x).
\end{align*}
Note that this corresponds to using a linear kernel on the learned features, $k_\calX(x, x') = \psi_{\hat\theta}(x)^\top\psi_{\hat\theta}(x'),$ in \cref{alg:kbr}.
In experiments, we further employ a finite-dimensional random Fourier feature approximation of $\phi(z)$ to speed computation  \citep{Rahimi2007RFF}.

\section{Kernel Bayes Filter} \label{sec:kbf}
We next describe an important use-case for KBR, namely the \emph{Kernel Bayes Filter (KBF)} \citep{Fukumizu2013KBR}. Consider the following time invariant state-space model with observable variables $X_t$ and hidden state variables $Z_t$,
\begin{align*}
    p(x_{1:T}, z_{1:T})\!=\!p(x_1|z_1) p(z_1) \prod_{t=2}^T p(x_t|z_t)p(z_t|z_{t-1}).
\end{align*}
Here, $X_{s:t}$ denotes $\{X_s, \dots X_{t}\}$.  Given this state-space model,  the filtering problem aims to infer sequentially the distributions $p(z_t|x_1, \dots x_t)$. Classically, filtering is solved by the Kalman filter, one of its nonlinear extensions (the extended Kalman filter (EKF) or unscented Kalman filter (UKF)), or a particle filter \citep{sarkka2013bayesian}. These methods require knowledge of $p(x_t|z_t)$ and $p(z_t|z_{t-1})$, however. KBF does not require knowing these distributions and learns them from samples $(X_{1:T}, Z_{1:T})$ of both observable and hidden variables in a training phase.

Given a test sequence $\{\tilde x_1, \dots \tilde x_T\}$,  KBF sequentially applies KBR  to obtain kernel embedding ${m}_{Z_t| \tilde x_{1:t}}$, where $m_{Z_t| \tilde x_{1:s}}$ denotes the embedding of the posterior distribution $P(z_t|X_{1:s} = \tilde x_{1:s})$. This can be obtained by iterating the following two steps. Assume that we have the embedding ${m}_{Z_t| \tilde x_{1:t}}$. Then, we can compute the embedding of forward prediction ${m}_{Z_{t+1}| \tilde x_{1:t}}$ by 
\begin{align*}
    {m}_{Z_{t+1}| \tilde x_{1:t}} = (E_{Z_{t+1}|Z_t})^*{m}_{Z_{t}| \tilde x_{1:t}},
\end{align*}
where $E_{Z_{t+1}|Z_t}$ is the conditional operator for $P(z_{t+1}|z_t)$. Empirically, this is estimated from data $\{Z_t, Z_{t+1}\}$,
\begin{align*}
    \hat{E}_{Z_{t+1}|Z_t} = (\hat{C}_{\mathtt{prev}, \mathtt{prev}} + \lambda' I)^{-1} \hat{C}_{\mathtt{prev}, \mathtt{post}},
\end{align*}
where $\lambda'$ is a regularizing coefficient and 
\begin{align*}
    &\hat{C}_{\mathtt{prev}, \mathtt{prev}} = \frac1{T-1} \sum_{t=1}^{T-1} \phi(z_t) \otimes \phi(z_t), \\
    &\hat{C}_{\mathtt{prev}, \mathtt{post}} = \frac1{T-1} \sum_{t=1}^{T-1} \phi(z_t) \otimes \phi(z_{t+1}).
\end{align*}

Given the probability of forward prediction $P(z_{t+1}| x_{1:t})$, we can obtain $P(z_{t+1}| x_{1:t+1})$ by applying  Bayes' rule,
\begin{align*}
    p(z_{t+1}|x_{1:t+1}) \propto p(x_{t+1}|z_{t+1}) p(z_{t+1}|x_{1:t}).
\end{align*}
Hence, we can obtain the filtering embedding ${m}_{Z_{t+1}| \tilde x_{1:t+1}}$   at the next timestep by applying KBR with prior embedding $m_\Pi = {m}_{Z_{t+1}| \tilde x_{1:t}}$ and samples $(X_{1:T}, Z_{1:T})$ at the conditioning point $\tilde x = \tilde x_{t+1}$. By repeating this process, we can conduct filtering for the entire sequence $\tilde x_{1:t}$ .

Empirically, the estimated embedding $\hat{m}_{Z_t| \tilde x_{1:s}}$ is represented by a linear combination of features $\phi(z_t)$ as $\hat{m}_{Z_t| \tilde x_{1:s}} = \sum_{i=1} w_i^{(s,t)} \phi(z_i)$. The update equations for weights $\vec{w}_i^{(s,t)}$ are summarized in \cref{alg:kbf}. If we have any prior knowledge of $P(z_1)$, we can initialize $\vec{w}^{(0,1)}$ accordingly. Otherwise, we can set $\vec{w}^{(0,1)} = (1/T, \dots, 1/T)^\top$, which initialization is used in the experiments.

\section{Related Work on Neural Filtering}

Several recent methods have been proposed combining state-space models with neural networks \cite{Rahul2015DKF,Klushyn2021DeepSSM,Rangapuram2018DeepSSM}, aiming to learn the latent dynamic and the observation models from observed sequences $X_{1:T}$ alone. These approaches assume a {\em parametric} form of the latent dynamics: for example, the Deep Kalman filter \citep{Rahul2015DKF} assumes that the distribution of the latents is Gaussian, with mean and covariance
which are nonlinear functions of the previous latent state. DeepSSM \citep{Rangapuram2018DeepSSM} assumes  linear latent dynamics with Gaussian noise, and EKVAE \citep{Klushyn2021DeepSSM} uses a locally linear Gaussian transition model. These models use the variational inference techniques to learn the parameters, which makes it challenging to  prove the convergence to the true models. In contrast, KBF learns latent dynamics and observation model nonparametrically from samples, and the accuracy of the filtering is guaranteed from the convergence  of KBR.

\begin{algorithm}[tb]
   \caption{Kernel Bayes Filter}
   \label{alg:kbf}
\begin{algorithmic}[1]
\setlength\belowdisplayskip{2pt}
\setlength\abovedisplayskip{2pt}
   \REQUIRE Training Sequence $\{(x_t, z_t)\}_{t=1}^T$, regularizing parameter $(\eta, \lambda, \lambda')$, Test sequence $\{\tilde x_t\}_{t=1}^{\tilde T}$.
   \STATE Initialize $\vec{w}^{(0,1)}$
   \FOR{$t = 1$ \textbf{to} $\tilde T$}
   \STATE $\vec{w}^{(t,t)} \leftarrow$ \cref{alg:kbr} for data $\{(x_t, z_t)\}_{t=1}^T$, prior embedding $\hat{m}_\Pi = \sum_{i=1} w_i^{(t-1,t)} \phi(z_i)$, conditioning point $\tilde x = \tilde x_t$.
   \STATE Compute $\vec{w}^{(t,t+1)}$ with $w_1^{(t,t+1)} = 0$ and 
   \begin{align*}
       \!\vec{w}_{2:T}^{(t,t+1)} = (G_{Z_{-1}} + (T-1)\lambda' I)^{-1} \tilde G_{Z_{-1}} \vec{w}^{(t,t)}
   \end{align*}
   where $(G_{Z_{-1}})_{ij} = k_\calZ(z_i, z_j) \in \mathbb{R}^{T-1 \times T-1}$ and $(\tilde G_{Z_{-1}})_{ij} = k_\calZ(z_i, z_j) \in \mathbb{R}^{T-1 \times T}$.
   \ENDFOR
\end{algorithmic}
\end{algorithm}

\section{Experiments} \label{sec:experiment}
In this section, we empirically investigate the performance of our KBR estimator in a variety of settings, including the problem of learning posterior mean proposed in \citet{Fukumizu2013KBR}, as well as challenging filtering problems where the observations are high-dimensional images.

\subsection{Learning Posterior Mean From Samples}
We revisit here the problem introduced by \citet{Fukumizu2013KBR}, which learns the posterior mean from samples. Let $X \in \mathbb{R}^d$ and $Z \in \mathbb{R}^d$ be generated from $P(X,Z) = \mathcal{N}((\bm{1}_d^\top, \bm{0}_d^\top)^\top, V)$, where  $\bm{1}_d = (1,\dots, 1)^\top \in \mathbb{R}^d, \bm{0}_d = (0,\dots, 0)^\top \in \mathbb{R}^d$ and $V = \frac1{2d}A^\top A + 2I$, with each component of $A \in \mathbb{R}^{2d\times 2d}$  sampled from the standard Gaussian distribution on each run. The prior is set to be $\Pi(Z) = \mathcal{N}(\bm{0}_d, V_{ZZ}/2)$, where $V_{ZZ}$ is the $Z$-component of $V$. We construct the prior embedding using 200 samples from $\Pi(Z)$, and we aim to predict the mean of posterior $\expect[Q]{Z|X}$ using $n = 200$ data points from $P(X,Z)$, as the dimension $d$ varies.

\begin{figure*}[ht]
\begin{minipage}[t]{0.6\columnwidth}
    \centering
    \includegraphics[width=\linewidth]{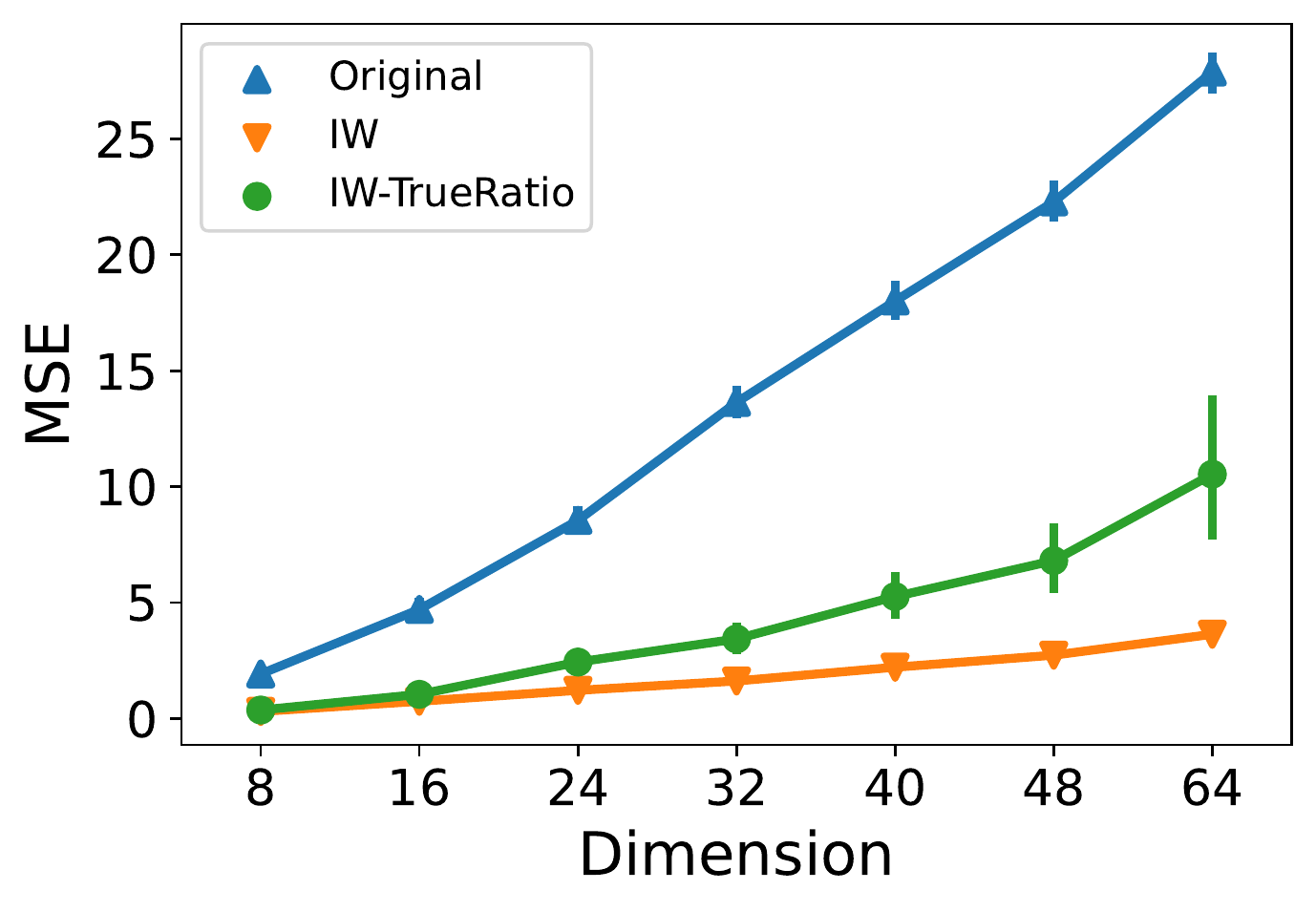}
    \caption{MSE as a function of dimension for posterior mean prediction}
    \label{fig:kbr_res}
  \end{minipage}
  \hfill
  \begin{minipage}[t]{0.98\columnwidth}
   \centering
    \includegraphics[width =\linewidth]{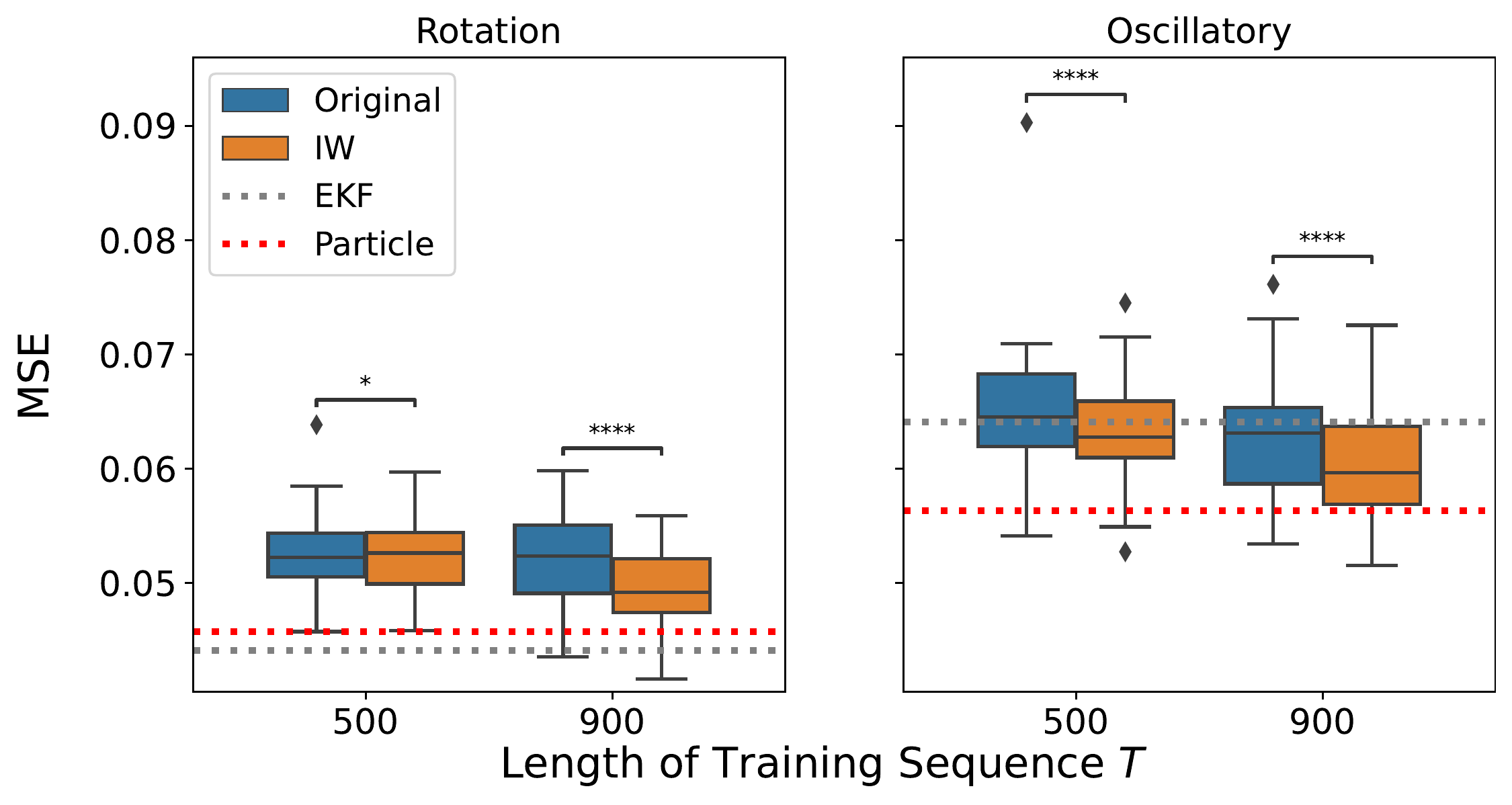}
  \caption{Results for low-dimensional KBF experiments. The asterisks indicates the level of statistical significance (*:  $p< 0.05$, **: $p<0.01$, ***: $p<0.001$, ****: $p< 0.0001$), which is tested by the Wilcoxon signed-rank test.} 
  \label{fig:low-dimensional}
  \end{minipage}
  \hfill
  \begin{minipage}{0.4\columnwidth}
   \centering
   \vspace{-70pt}
   \includegraphics[width=\linewidth]{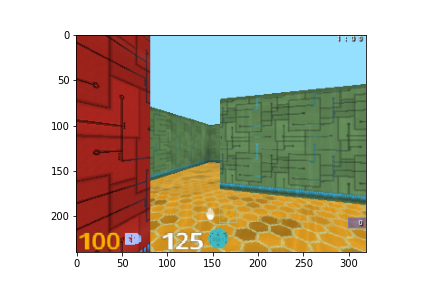}
   \vspace{-20pt}
    \caption{DeepMind Lab} \label{fig:maze-sample}
    \includegraphics[width=\linewidth]{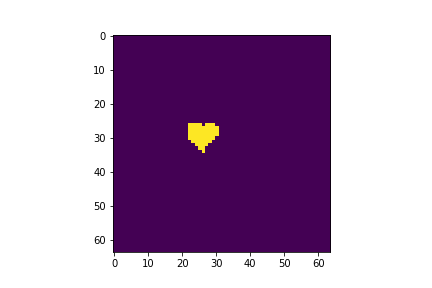}
    \vspace{-20pt}
    \caption{dSprite} \label{fig:dSprite-sample}
  \end{minipage}
\end{figure*}

\cref{fig:kbr_res} summarizes the MSEs over 30 runs when the conditioning points $\tilde x$ are sampled from $\mathcal{N}(\bm{0}_d, V_{XX})$. Here, \texttt{Original} denotes the performance of the original KBR estimator using operator $\hatregorgEQ$ in \eqref{eq:original-kbr}, and \texttt{IW} is computed from IW-KBR approach, which  uses operator $\hatregEQ$ in \eqref{eq:proposed-kbr}. In this experiment, we set $\eta = \lambda = 0.2$ and used Gaussian kernels for both KBR methods, where the bandwidth is given by the median trick. Unsurprisingly, the error increases as  dimension increases. However, the IW-KBR estimator performs significantly better than the original KBR estimator. This illustrates the robustness of the IW-KBR approach even when the model is misspecified, since the correct $\expect[Q]{Z|X}$, which is a linear function of $X$, does not belong to the Gaussian RKHS.

To show how the quality of the density ratio estimate relates to the overall performance, we include the result of IWKBR with the true density ratio $r_0$, denoted as ``IW-TrueRatio'' in \cref{fig:kbr_res}. Surprisingly, IW-TrueRatio performs worse than the original IWKBR, which estimates density ratio using KuLSIF. This suggests that true density ratio yields a suboptimal bias/variance tradeoff, and greater variance for the {\em finite sample estimate of the KBR posterior}, compared with the smoothed estimate obtained from KuLSIF. 

\subsection{Low-Dimensional KBF}
We consider a filtering problem introduced by \citet{Fukumizu2013KBR}, with latent $Z_t \in \mathbb{R}^2$ and observation $X_t \in \mathbb{R}^2$. The dynamics of the latent $Z_t = (u_t, v_t)^\top$ are given as follows. Let $\theta_t \in [0, 2\pi]$ be 
\begin{align*}
    \cos \theta_t = \frac{u_t}{\sqrt{u_t^2 + v_t^2}}, \quad \sin \theta_t = \frac{v_t}{\sqrt{u_t^2 + v_t^2}}.
\end{align*}
The latent $Z_{t+1}= (u_{t+1}, v_{t+1})^\top$ is then
\begin{align}
    \begin{pmatrix}
        u_{t+1}\\
        v_{t+1}
    \end{pmatrix} = (1 + \beta\sin(M\theta_t))\begin{pmatrix}
        \cos (\theta_t + \omega)\\
        \sin (\theta_t + \omega)
    \end{pmatrix} + \epsilon_Z, \label{eq:dynamics}
\end{align}
for given parameters $(\beta, M, \omega)$. The observation $X_t$ is given by $X_t = Z_t + \epsilon_X$.
Here, $\epsilon_X$ and $\epsilon_Z$ are noise variables sampled from $\epsilon_X \sim \mathcal{N}(0, \sigma^2_X I)$ and $\epsilon_Z \sim \mathcal{N}(0, \sigma^2_Z I)$. 

\citeauthor{Fukumizu2013KBR} test performance using the ``Rotation'' dynamics $\omega = 0.3, b = 0$ and ``Oscillatory'' dynamics $\omega = 0.4, b = 0.4, M = 8$, where the noise level is set to $\sigma_X = \sigma_Z = 0.2$ in both scenarios. Using the same dynamics, we evaluate the performance of our proposed estimator by the MSE in predicting $Z_t$ from $X_{1:t}$, where the length of the test sequence is set to 200. We repeated the experiments  30 times for each setting. 

Results are summarized in \cref{fig:low-dimensional}:
 \texttt{Original} denotes the results for using original KBR approach in \cref{alg:kbf}, while \texttt{IW} used our IW-KBR approach. For both approaches, we used Gaussian kernels  $k_\calX, k_\calZ$ whose bandwidths are set to $\beta D_\calX$ and $\beta D_\calZ$, respectively. Here, $D_\calX$ and $D_\calZ$ are the medians of pairwise distances among the training samples. We used the  KuLSIF leave-one-out cross-validation procedure  \citep{Kanamori2012KuLSIF} to tune the regularization parameter $\eta,$ and set $\lambda' = \eta$. This leaves two parameters to be tuned: the scaling parameter $\beta$ and the regularization parameter $\lambda$.  These are selected  using the last 200 steps of the training sequence as a validation set. We also include the results for the extended Kalman filter (EKF) and the particle filter \citep{sarkka2013bayesian}.

\cref{fig:low-dimensional} shows that the EKF and the particle filter perform slightly better than KBF methods in ``Rotation'' dynamics, which replicates the results in \citeauthor{Fukumizu2013KBR}. This is not surprising, since these methods have access to the true dynamics, which makes the tracking easier. In ``Oscillatory'' dynamics, however, which has a stronger nonlinearity, KBF displays comparable or better performance than the EKF, which suffers from a large error caused by the linear approximation. In both scenarios, IW-KBR slightly outperforms the original KBR.

\subsection{High-Dimensional KBF}
Finally, we apply KBF to scenarios where observations are given by  images. We set up two experiments: one uses high dimensional complex images while the latent follows simple dynamics, while the other considers  complex dynamics with observations given by relatively simple images. In both cases, the  particle filter performs significantly worse than the methods based on neural networks, since  likelihood evaluation is unstable in the high dimensional observation space, despite the true dynamics being available.

{\bf Deepmind Lab Video:} The first high-dimensional KBF experiment uses DeepMind Lab \citep{DeepMindLab}, which is a 3D game-like platform for reinforcement learning. This platform provides a first-person viewpoint through the eyes of the agent. An example image can be found in \cref{fig:maze-sample}. Based on this platform, we introduce a filtering problem to estimate the agent's orientation at a specific point in the maze.

The dynamics are  as follows.  Let $\theta_t \in [0, 2\pi)$ be the direction that an agent is facing at time $t$. The next direction is
\begin{align*}
    \theta_{t+1} =  \theta_t + \omega + \epsilon_\theta \quad (\mathrm{mod}~2\pi),
\end{align*}
where noise $\epsilon_\theta \sim \mathcal{N}(0, \sigma^2_\theta)$. Let $Z_t = (\cos \theta_t, \sin \theta_t)$ and $X_t$ be the image that corresponds to the direction $\theta_t + \epsilon_X$ where $\epsilon_X \sim \mathcal{N}(0, \sigma^2_\calX)$. We used the RGB images down-scaled to $(36, 48)$, which makes the observation $36 \times 48 \times 3 = 5184$ dimensions. We  added Gaussian noise $\mathcal{N}(0, 0.01)$ to the each dimension of the observations $Z_t$.  We ran the experiments with $\omega = 0.4, \sigma_\theta = 0.2, \sigma_X = 0.2$, MSEs in 30 runs are summarized in \cref{fig:maze-result}.

\begin{figure}[ht]
    \centering
    \includegraphics[width=\columnwidth]{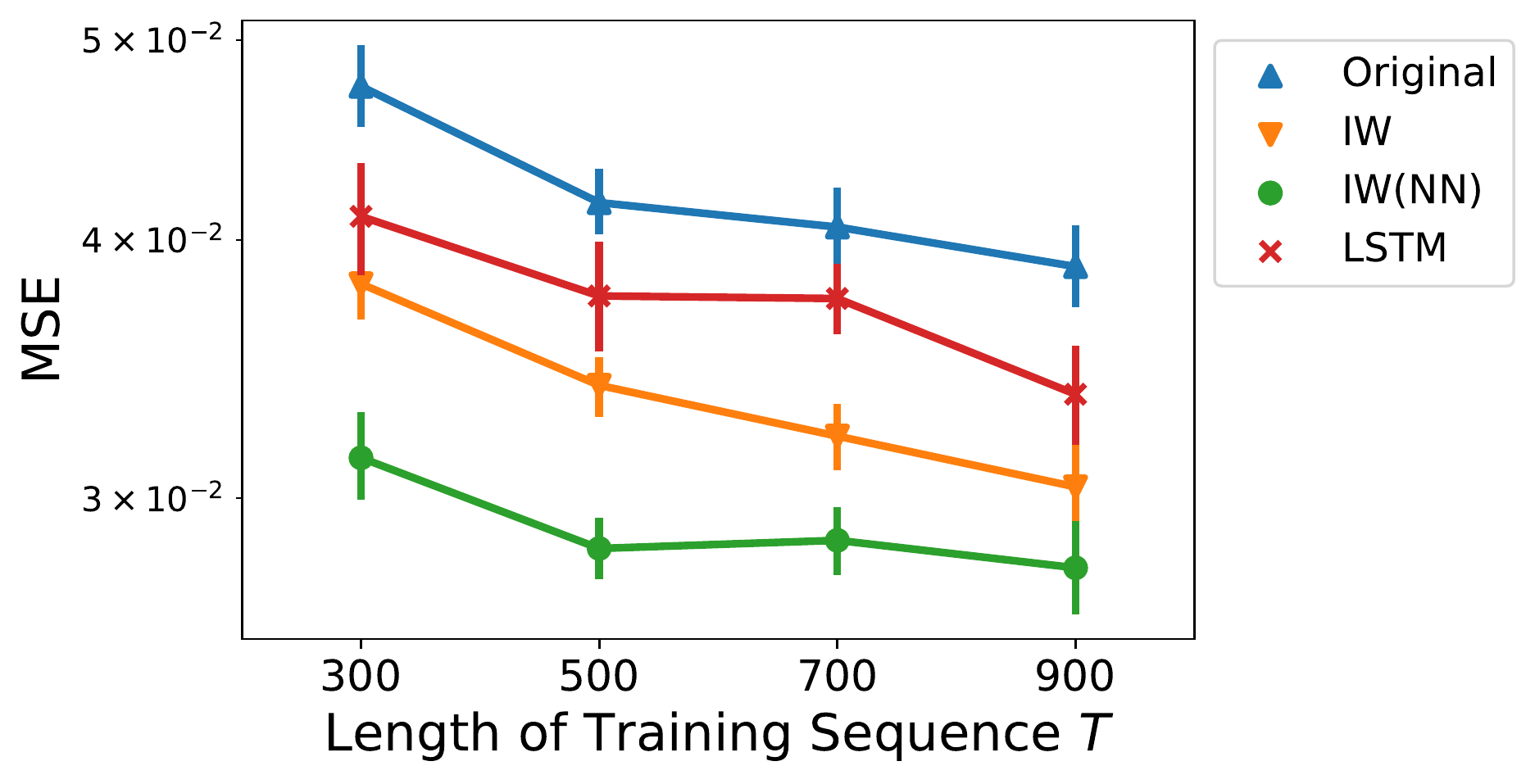}
    \caption{Result for DeepMind Lab setting}
    \label{fig:maze-result}
\end{figure}

In addition to \texttt{Original} and \texttt{IW}, whose hyper-parameters are selected by the same procedure as in the low-dimensional experiments, we introduce two neural network based methods: \texttt{IW(NN)} and \texttt{LSTM}. \texttt{IW(NN)} uses adaptive feature discussed in \cref{subsec:adaptive-feature}. Here, instead of learning different features for each timestep, which would be  time consuming and redundant, we learn adaptive feature $\psi_\theta$ by minimizing 
\begin{align*}
    \hatlossP(E, \theta) = \frac1n \sum_{i=1}^n \|\tilde\phi(z_i) - E^*\psi_\theta(x_i)\|^2 + \lambda \|E\|^2
\end{align*}
with the learned parameter $\hat{\theta}$ being used for all timesteps.  \texttt{LSTM} denotes an LSTM baseline \citep{HochSchm97LSTM}, which predicts $Z_t$ from features extracted from $X_{t-10:t}$. To make the comparison fair,  \texttt{LSTM} used the same network architecture in the feature extractor as \texttt{IW(NN)}.

As in low-dimensional cases, \texttt{IW} consistently performs better for all sequence lengths
 (\cref{fig:maze-result}). The baseline \texttt{LSTM}  performs similarly to \texttt{IW}, however, even though it does not explicitly model the dynamics. This is because functions in the RKHS are not expressive enough to model the relationship between the direction and the images. This is mitigated by using adaptive features in \texttt{IW(NN)}, which outperforms other methods by taking the advantage of the strong expressive power of neural networks.

{\bf dSprite Setting:} The second high-dimensional KBF experiment uses dSprite \citep{dsprites}, which is a dataset of 2D shape images as shown in \cref{fig:dSprite-sample}. Here, we consider the latent $Z_t$ following the same dynamics as \eqref{eq:dynamics}, and the model observes the image $X_t$ where the position of the shape corresponds to the noisy observation of the latent $Z_t + \varepsilon_X, \varepsilon_X \sim \mathcal{N}(0, \sigma^2_X I)$. We used the ``Oscillatory'' dynamics (i.e. $\omega = 0.4, b = 0.4, M = 8$) and set noise levels to $\sigma_X = 0.05, \sigma_Z = 0.2$. Again, we added Gaussian noise $\mathcal{N}(0, 0.01)$ to the images.

\begin{figure}[ht]
    \centering
    \includegraphics[width =\columnwidth]{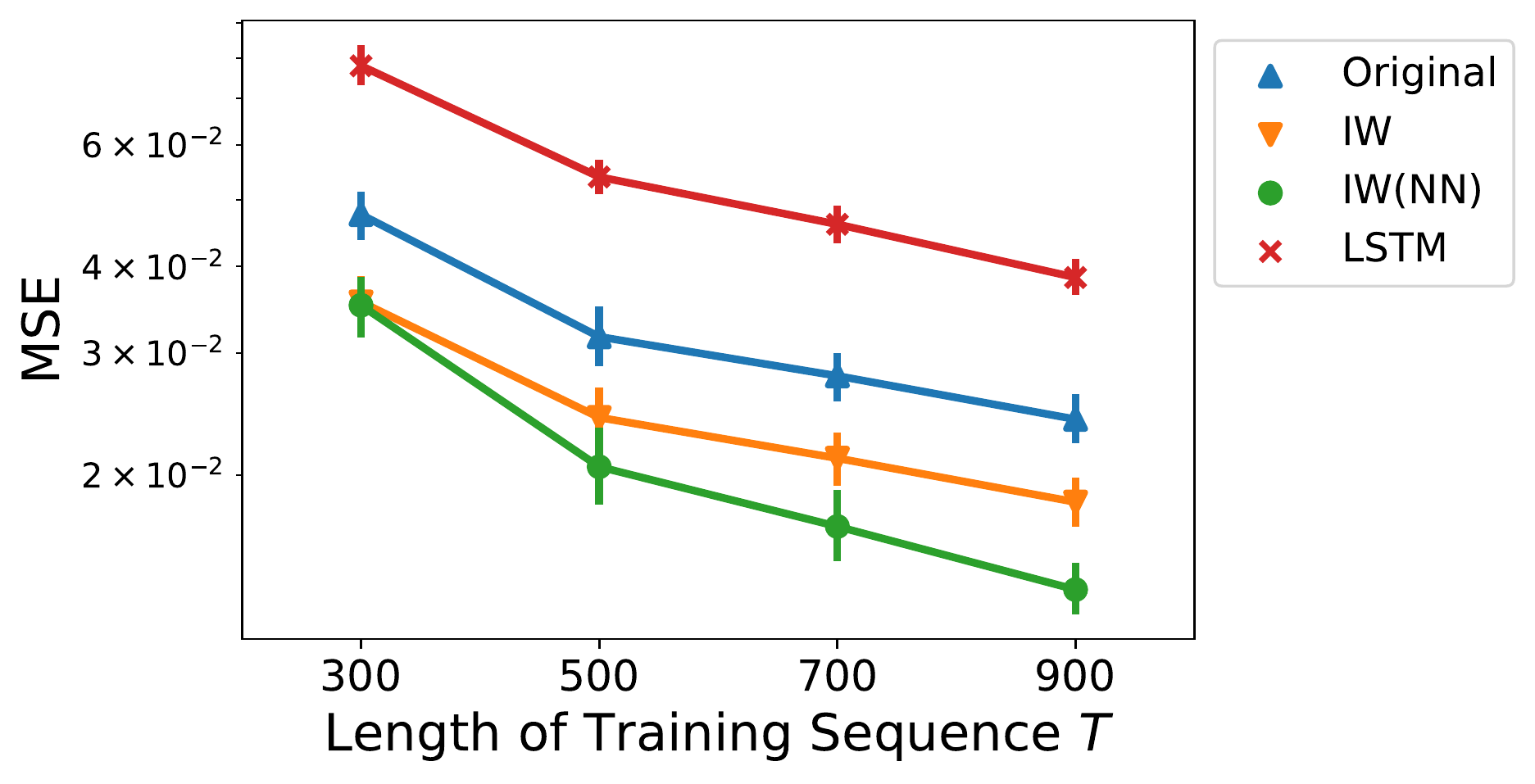}\caption{Result for dSprite setting}
    \label{fig:dsprite-result} 
\end{figure}

MSEs across 30 runs are summarized in \cref{fig:dsprite-result}. Unlike in DeepMind Lab setting, \texttt{LSTM} performs the worst in this setting. This suggests the advantage of KBF methods, which explicitly model the dynamics and exploit them in the filtering. Among KBF methods, \texttt{IW} and \texttt{IW(NN)} perform significantly better than \texttt{Original}, demonstrating the superiority of the IW approach.

\section{Conclusion}
In this paper, we proposed a novel approach to kernel Bayes' rule,  IW-KBR, which minimizes a loss estimated by importance weighting. We established consistency of IW-KBR based on a novel analysis of an RKHS density ratio estimate, which is of independent interest. Our empirical evaluation shows that the IW-KBR significantly outperforms the existing estimator in both Bayesian inference and  filtering for state-space models. Furthermore, by learning adaptive neural net features, IW-KBR outperforms a neural sequence model in  filtering problems with high-dimensional image observations.

In future work, we suggest exploring different density ratio estimation techniques for our setting. It is well-known in the density ratio estimation context that KuLSIF estimator may suffer from high variance. To mitigate this, \citet{Yamada2011RDRE} proposed to use relative density-ratio estimation. Deriving a consistency result for such an estimator and applying it in kernel Bayes' rule would be a promising approach. It would further be of  interest to apply IW-KBR in additional settings, such as approximate Bayesian computation \citep{ABC1,ABC2}, as also discussed by \citet{Fukumizu2013KBR}.

\section*{Acknowledgements}
This work was supported by the Gatsby Charitable Foundation. We thank Dr. Jiaxin Shi for having an informative discussion and suggesting related work.

\bibliographystyle{styles/icml2022}
\bibliography{reference}

\newpage
\appendix
\onecolumn
\section{Implication of \cref{assum:density-ratio}} \label{sec:dre-assume}

In this appendix, we will discuss the implication of \cref{assum:density-ratio} when we consider the ratio of two Gaussian distributions.  Let $\pi \sim \mathcal{N}(0, 1)$ and $p \sim \mathcal{N}(0, 2)$. Then, the density ratio is 
\begin{align*}
 r_0(z) = \frac{\pi(z)}{p(z)} = \sqrt{2} \exp\left(-\frac{z^2}{4}\right),   
\end{align*}
which is in the RKHS $\mathcal{H}_\mathcal{Z}$ induced by Gaussian kernel $k(z, z') = \exp(-(z-z')^2/4)$. Indeed, we can see that
\begin{align*}
    r_0(z) = \sqrt{2}k(z,0). 
\end{align*}
Note that, from reproducing property, we have $\langle r_0, f \rangle = \sqrt{2}f(0)$ for all $f\in\mathcal{H}_\mathcal{Z}$.

Given this, we can analytically compute the eigendecomposition of the covariance operator as 
\begin{align*}
    C^{(P)}_{ZZ} = \sum_i \lambda_i (\sqrt{\lambda_i}e_i(z)) \otimes (\sqrt{\lambda_i} e_i(z)),
\end{align*}
where $\lambda_i = O(B^k)$ with constant $B<1$ and $e_i$ is the $i$-th order Hermite polynomial \citep[][Section 4.3]{RasWil06}. Assumption 3.2 requires $\|(C^{(P)}_{ZZ})^{-\beta_1} r_0\|^2$  finite, meaning $\beta_1$ is the maximum value that satisfies
\begin{align*}
    \|(C^{(P)}_{ZZ})^{-\beta_1} r_0\|^2 = \sum_i \lambda_i^{-2\beta_1} \langle \sqrt{\lambda_i} e_i, r_0 \rangle^2 = 2 \sum_i \lambda_i^{1 -2\beta_1} e^2_i(0) \leq \infty.
\end{align*}

\section{Proof of Theorems} \label{sec:prof}
In this appendix, we provide the proof of our theorems. 

\subsection{Proof for Convergence Analysis} \label{subsec:proof}
We will rely on the following concentration inequality.
\begin{proposition}[Lemma 2 of \citep{Smale2007LT}] \label{prop:concentration}
    Let $\xi$ be a random variable taking values in a real separable RKHS $\calH$ with $\|\xi\|\leq M$ almost surely, and let $\xi_1,\dots, \xi_n$ be i.i.d. random variables distributed as $\xi$. Then, for all $n \in \mathbb{N}$ and $\delta \in (0,1)$,
    \begin{align*}
        \prob{\left\|\frac1n \sum_{i=1}^n \xi_i - \expect{\xi}\right\|\leq \frac{2M\log 2/\delta}{n} + \sqrt{\frac{2\expect{\|\xi\|^2}\log 2/\delta}{n}}} \geq 1 - \delta.
    \end{align*}
\end{proposition}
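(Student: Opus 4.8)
The plan is to recognize the statement as a Bernstein-type concentration inequality for Hilbert-space-valued random variables and to prove it in three moves: reduce to a centered sum, establish an exponential tail bound for the norm of that sum, and invert the tail at confidence level $\delta$.

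First I would center the summands. Set $\eta_i = \xi_i - \expect{\xi}$, so that $\frac1n\sum_{i=1}^n \xi_i - \expect{\xi} = \frac1n S_n$ with $S_n = \sum_{i=1}^n \eta_i$. The $\eta_i$ are i.i.d. in $\calH$ with $\expect{\eta_i} = 0$, and since $\|\xi_i\|\le M$ and $\|\expect{\xi}\|\le\expect{\|\xi\|}\le M$ by Jensen's inequality, we have $\|\eta_i\|\le 2M$ almost surely. Writing $\sigma^2 := \expect{\|\xi\|^2}$, their variance satisfies $\expect{\|\eta_i\|^2} = \expect{\|\xi\|^2} - \|\expect{\xi}\|^2 \le \sigma^2$.

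The core step is a vector Bernstein tail bound for $S_n$, of the form
$$\prob{\|S_n\| \ge t} \le 2\exp\left(-\frac{t^2}{2(n\sigma^2 + Mt)}\right).$$
I would obtain this through moment control. Because the norm is subadditive but not additive, the scalar Chernoff method does not apply to $\|S_n\|$ directly; instead one bounds the moments $\expect{\|S_n\|^m}$ recursively, expanding $\|S_k\|^2 = \|S_{k-1}\|^2 + 2\braket{S_{k-1}, \eta_k} + \|\eta_k\|^2$ and using $\expect{\braket{S_{k-1}, \eta_k}\mid S_{k-1}} = 0$ together with the almost-sure bound and the variance estimate above. This yields Bernstein-type moment estimates $\expect{\|S_n\|^m}\le \frac{m!}{2}(n\sigma^2)\,c^{m-2}$ with scale $c$ proportional to $M$. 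Applying Markov's inequality moment-by-moment, $\prob{\|S_n\|\ge t}\le t^{-m}\expect{\|S_n\|^m}$, and optimizing over the order $m$ (equivalently, summing the moments into an exponential and running a Chernoff bound) produces the displayed tail. This is the Pinelis--Sakhanenko / Yurinsky inequality, and it is the part I would either cite or reproduce in full. Finally, I would invert the tail at level $\delta$: setting the right-hand side equal to $\delta$ gives the quadratic $t^2 = 2\log(2/\delta)(n\sigma^2 + Mt)$, whose positive root is bounded via $\sqrt{a+b}\le\sqrt a+\sqrt b$ by $t \le 2M\log(2/\delta) + \sqrt{2n\sigma^2\log(2/\delta)}$. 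Dividing by $n$ gives, with probability at least $1-\delta$,
$$\left\|\frac1n S_n\right\| \le \frac{2M\log(2/\delta)}{n} + \sqrt{\frac{2\sigma^2\log(2/\delta)}{n}},$$
which is the claim after substituting $\sigma^2 = \expect{\|\xi\|^2}$.

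The main obstacle is precisely the vector-valued Bernstein tail, i.e. the moment-to-tail passage for $\|S_n\|$. Because $\|\cdot\|$ is not additive, one cannot apply the scalar moment-generating-function argument to $\|S_n\|$; the genuine content lies in the recursive moment bound (or, alternatively, a Pinelis martingale/smoothing argument exploiting that a Hilbert space is $2$-smooth). Tracking the constants carefully enough to land on exactly $2M$ in the first term and $2\expect{\|\xi\|^2}$ in the second is the delicate bookkeeping, since the crude a.s. bound $\|\eta_i\|\le 2M$ must be combined with the $\tfrac13$-type factor in the Bernstein moment condition to avoid losing an extra constant.
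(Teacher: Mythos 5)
The paper does not actually prove this statement---it is imported verbatim as Lemma 2 of Smale and Zhou (2007)---and your argument reproduces exactly the standard derivation behind that citation: centering, the Pinelis--Sakhanenko/Yurinsky vector Bernstein tail $\mathbb{P}(\|S_n\|\ge t)\le 2\exp\left(-t^2/\left(2(n\sigma^2+Mt)\right)\right)$ (valid because the centered a.s.\ bound $2M$ enters through the Bernstein $1/3$-factor and $2M/3\le M$), and inversion of the resulting quadratic in $t$ via $\sqrt{a+b}\le\sqrt{a}+\sqrt{b}$, which lands on precisely the stated constants $\frac{2M\log(2/\delta)}{n}+\sqrt{\frac{2\mathbb{E}\|\xi\|^2\log(2/\delta)}{n}}$. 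Your proposal is correct; the one step you leave as a citation or sketch---the vector-valued Bernstein moment bound, where the scalar Chernoff method fails for $\|S_n\|$---is exactly the step the original source likewise delegates to the Hilbert-space Bernstein inequality, so your route matches the intended proof.
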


\subsubsection{Proof of \cref{thm:stage1-converge,thm:stage1-converge-operator}}
We review some properties of the density ratio $r_0$ used in KuLSIF.
\begin{lemma}[\citep{Kanamori2009LSIF}] \label{lem:lsif}
    If the density ratio $r_0  \in \calH_\calZ$, then we have 
    \begin{align*}
        \covPZZ r_0 = m_\Pi,
    \end{align*}
    where $m_\Pi = \expect[\Pi]{\phi(Z)}$.
\end{lemma}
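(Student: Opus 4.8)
The plan is to prove the identity by a direct computation: apply the covariance operator $\covPZZ$ to $r_0$ and collapse the resulting expression to the prior mean embedding via a change of measure. First I would write $\covPZZ r_0$ using the definition $\covPZZ = \expect[P]{\phi(Z) \otimes \phi(Z)}$ and move the application of $r_0$ inside the expectation, obtaining $\covPZZ r_0 = \expect[P]{[\phi(Z) \otimes \phi(Z)] r_0}$. Then I would invoke the tensor product convention $[a \otimes b]c = a\braket{b, c}$ together with the reproducing property $\braket{\phi(Z), r_0} = r_0(Z)$, which reduces the integrand to $\phi(Z)\,r_0(Z)$, giving $\covPZZ r_0 = \expect[P]{\phi(Z)\,r_0(Z)}$.

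The final step is the change of measure. Since $r_0(z) = \pi(z)/p(z)$, the density $p$ cancels and the expectation under $P$ becomes an expectation under $\Pi$:
\begin{align*}
    \expect[P]{\phi(Z)\,r_0(Z)} = \int \phi(z)\,\frac{\pi(z)}{p(z)}\,p(z)\,\intd z = \int \phi(z)\,\pi(z)\,\intd z = \expect[\Pi]{\phi(Z)} = m_\Pi,
\end{align*}
which is exactly the claimed identity. Everything after the first step is routine bookkeeping using the reproducing property and the definition of the mean embedding.

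The main obstacle, and essentially the only analytic point requiring care, is justifying the interchange of the operator application with the expectation, i.e. the identity $\covPZZ r_0 = \expect[P]{[\phi(Z)\otimes\phi(Z)]r_0}$ and the treatment of these expectations as well-defined Bochner integrals in $\calH_\calZ$. Under \cref{assum:bound-feature-norm} the features are bounded, $\|\phi(z)\| \leq \kappa_1 < \infty$, and the hypothesis $r_0 \in \calH_\calZ$ forces $r_0$ to be bounded (as noted in the discussion following \cref{assum:density-ratio}, since $\sup_z r_0(z) \leq \|r_0\|\sup_z\|\phi(z)\|$). Consequently the integrand $\phi(Z)\,r_0(Z)$ is Bochner integrable, and the interchange follows from the boundedness of $\covPZZ$ together with standard properties of Bochner integrals. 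With this justification in place, the three displayed equalities above complete the argument.
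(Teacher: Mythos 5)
Your proposal is correct and follows essentially the same route as the paper's own proof: apply the reproducing property to reduce $\covPZZ r_0$ to $\expect[P]{r_0(Z)\phi(Z)}$, then use the definition of the density ratio to change measure from $P$ to $\Pi$. The only difference is that you spell out the Bochner-integrability justification for interchanging the operator with the expectation, which the paper leaves implicit.
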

\begin{proof}
From reproducing characteristics, we have 
\begin{align*}
    \covPZZ r_0 = \expect[P]{r_0(Z)\phi(Z)} = \expect[\Pi]{\phi(Z)}.
\end{align*}
The last equality holds from the definition of the density ratio $r_0(z) = \pi(z)/p(z)$.
\end{proof}

Given Lemma \ref{lem:lsif}, we can bound the error of untruncated KuLISF estimator in RKHS norm. Let $\tilde{r}_\eta$ be \begin{align*}
    \tilde{r}_\eta = \left(\hatcovPZZ + \eta I\right)^{-1}\hat{m}_\Pi.
\end{align*}
Note that the weight $\gamma_i$ appearing in the original KBR \eqref{eq:original-gamma} can be understood as the value of $\tilde{r}_\eta$ at specific data points: $\gamma_i = \tilde{r}_\eta(z_i)$. Hence, the weight we use can be written as
\begin{align*}
    \hat{r}_i = \max(0, \tilde{r}_\eta(z_i)).
\end{align*}
Although we use truncated weights as above in KBR, we first derive an upper-bound on the error $\|r_0 - \tilde{r}_\eta\|$. 
Furthermore, we define the function $r_\eta$ which is a popular version of $\tilde{r}_\eta$. 
\begin{align*}
    r_\eta = \left(\covPZZ + \eta I\right)^{-1}m_\Pi.
\end{align*}
Using $r_\eta$, we can decompose the error as 
\begin{align*}
    \|\tilde{r}_\eta - r_0\| \leq  \|\tilde{r}_\eta - r_\eta\| + \|r_\eta - r_0\|.
\end{align*}
The second term can be bounded using an approach similar to Theorem 6 in \citet{Singh2019KIV}.
\begin{lemma}\label{lem:stage1-bias}
    Suppose \cref{assum:density-ratio}, then we have 
    \begin{align*}
        \|r_\eta - r_0\| \leq \eta^{\beta_1}\zeta_1.
    \end{align*}
\end{lemma}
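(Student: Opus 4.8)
The plan is to reduce the claim to a purely spectral bound on a function of $\covPZZ$. First I would invoke \cref{lem:lsif}, which gives $m_\Pi = \covPZZ r_0$; substituting this into the definition of $r_\eta$ yields $r_\eta = (\covPZZ + \eta I)^{-1}\covPZZ r_0$. From here a short algebraic manipulation — writing $\covPZZ = (\covPZZ + \eta I) - \eta I$ — produces the clean identity
\begin{align*}
    r_0 - r_\eta = \eta(\covPZZ + \eta I)^{-1} r_0.
\end{align*}
This is the key reformulation: the regularization bias is exactly $\eta(\covPZZ + \eta I)^{-1}$ applied to the target $r_0$.

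Next I would bring in the smoothness \cref{assum:density-ratio}, substituting $r_0 = (\covPZZ)^{\beta_1} g_1$ to obtain
\begin{align*}
    r_0 - r_\eta = \eta(\covPZZ + \eta I)^{-1}(\covPZZ)^{\beta_1} g_1,
\end{align*}
so that, by submultiplicativity, $\|r_0 - r_\eta\| \leq \|\eta(\covPZZ + \eta I)^{-1}(\covPZZ)^{\beta_1}\|\,\|g_1\|$, where the first factor is an operator norm. Using $\|g_1\| \leq \zeta_1$, it then remains only to show that this operator norm is at most $\eta^{\beta_1}$.

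The main step is this operator norm bound. Since $\covPZZ$ is self-adjoint, positive, and (under \cref{assum:bound-feature-norm}) bounded, I would apply the spectral theorem: the operator $\eta(\covPZZ + \eta I)^{-1}(\covPZZ)^{\beta_1}$ is $h(\covPZZ)$ for the scalar function $h(\lambda) = \eta\lambda^{\beta_1}/(\lambda + \eta)$, and its operator norm equals $\sup_{\lambda \geq 0} h(\lambda)$. The crux — and the only place any genuine inequality is needed — is the scalar bound $\eta\lambda^{\beta_1}/(\lambda + \eta) \leq \eta^{\beta_1}$ for all $\lambda \geq 0$ and $\beta_1 \in (0, 1/2]$. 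After clearing denominators this is equivalent to $\eta^{1-\beta_1}\lambda^{\beta_1} \leq \lambda + \eta$, which follows from weighted AM--GM (Young's inequality) with weights $\beta_1$ and $1-\beta_1$: indeed $\lambda^{\beta_1}\eta^{1-\beta_1} \leq \beta_1\lambda + (1-\beta_1)\eta \leq \lambda + \eta$. Combining the three displays then gives $\|r_\eta - r_0\| \leq \eta^{\beta_1}\zeta_1$.

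I would flag the scalar inequality as the only non-mechanical ingredient; everything else is rearrangement plus standard functional calculus. The one bookkeeping point to state carefully is that the fractional power $(\covPZZ)^{\beta_1}$ is well defined through the spectral measure of the bounded positive self-adjoint operator $\covPZZ$, and that $r_0$ genuinely lies in its range — both guaranteed by \cref{assum:density-ratio}. This mirrors the source-condition argument of Theorem~6 in \citet{Singh2019KIV}, as the statement anticipates.
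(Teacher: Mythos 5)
Your proof is correct and takes essentially the same route as the paper's: both invoke \cref{lem:lsif} to write $r_\eta - r_0 = ((\covPZZ+\eta I)^{-1}\covPZZ - I)r_0$, apply the source condition $r_0 = (\covPZZ)^{\beta_1}g_1$, and reduce everything to the scalar spectral inequality $\eta^{1-\beta_1}\lambda^{\beta_1} \le \lambda + \eta$. The only difference is presentational: the paper expands in the eigenbasis $(\nu_k, e_k)$ of $\covPZZ$ and bounds the factors $(\eta/(\nu_k+\eta))^{2-2\beta_1}(\nu_k/(\nu_k+\eta))^{2\beta_1} \le 1$ term by term — which is exactly your Young/AM--GM inequality in disguise — whereas you phrase the same bound as an operator-norm estimate via functional calculus and submultiplicativity.
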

\begin{proof}
        Let $(\nu_k, e_k)$ be the eigenvalues and eigenfunctions of $\covPZZ$, then 
    \begin{align*}
        \|r_\eta - r_0\|^2 &= \| ((\covPZZ + \eta I)^{-1} \covPZZ - I)r_0\|^2\\
        & = \sum_{k} \left(\frac{\nu_k}{\nu_k + \eta} - 1\right)^2 \braket{r_0, e_k}^2\\
        & = \sum_{k} \left(\frac{\eta}{\nu_k + \eta}\right) ^2\braket{r_0, e_k}^2 \left(\frac{\eta}{\eta} \cdot  \frac{\nu_k}{\nu_k} \cdot \frac{\eta + \nu_k}{\eta+ \nu_k}\right)^{2\beta_1}\\
        & = \eta^{2\beta_1} \sum_{k} \nu^{-2\beta_1}_k\braket{r_0, e_k}^2 \left(\frac{\eta}{\nu_k + \eta}\right)^{2-2\beta_1} \left(\frac{\nu_k}{\nu_k + \eta}\right)^{2\beta_1}\\
        & \leq \eta^{2\beta_1} \sum_{k} \nu^{-2\beta_1}_k\braket{r_0, e_k}^2  =\eta^{2\beta_1} \zeta_1^2,
    \end{align*}
    where the first equality uses \cref{lem:lsif}, and the last equality holds from \cref{assum:density-ratio} and the fact that
    \begin{align*}
        \|g_1\|^2 =  \|(\covPZZ)^{-\beta_1} r_0\|^2 = \sum_{k} \nu^{-2\beta_1}_k\braket{r_0, e_k}^2.
    \end{align*}
    \end{proof}
We now establish a useful lemma on the norm of functions.
\begin{lemma}\label{lem:stage1-norm}
     $\|r_\eta\| \leq \|r_0\|$.
\end{lemma}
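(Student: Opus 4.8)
The plan is to reduce everything to the spectral theorem for the self-adjoint positive covariance operator $\covPZZ$. First I would invoke \cref{lem:lsif}, which gives $m_\Pi = \covPZZ r_0$. Substituting this into the definition of $r_\eta$ yields the clean expression
\begin{align*}
    r_\eta = \left(\covPZZ + \eta I\right)^{-1} m_\Pi = \left(\covPZZ + \eta I\right)^{-1}\covPZZ\, r_0,
\end{align*}
so that $r_\eta$ is obtained from $r_0$ by applying the bounded operator $(\covPZZ + \eta I)^{-1}\covPZZ$.

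Next I would diagonalize using the same eigensystem $(\nu_k, e_k)$ of $\covPZZ$ already employed in the proof of \cref{lem:stage1-bias}. Since $\covPZZ$ is a positive, self-adjoint (Hilbert--Schmidt) operator, its eigenvalues satisfy $\nu_k \geq 0$, and the $e_k$ form an orthonormal basis of the relevant subspace. In this basis the operator above acts diagonally, multiplying the $k$-th coordinate $\braket{r_0, e_k}$ by the scalar $\nu_k/(\nu_k + \eta)$. Hence
\begin{align*}
    \|r_\eta\|^2 = \sum_k \left(\frac{\nu_k}{\nu_k + \eta}\right)^2 \braket{r_0, e_k}^2.
\end{align*}

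The final step is the elementary observation that, because $\nu_k \geq 0$ and $\eta > 0$, each filter factor obeys $0 \leq \nu_k/(\nu_k + \eta) \leq 1$, so each summand is bounded above by $\braket{r_0, e_k}^2$. Summing and using Parseval gives $\|r_\eta\|^2 \leq \sum_k \braket{r_0, e_k}^2 = \|r_0\|^2$, and taking square roots yields the claim. I do not anticipate a genuine obstacle here: the only points requiring a modicum of care are confirming that $r_0 \in \calH_\calZ$ (guaranteed by \cref{assum:density-ratio}) so that the coordinate expansion is valid, and that the spectral calculus for $(\covPZZ + \eta I)^{-1}\covPZZ$ is legitimate, which follows from $\covPZZ$ being a bounded positive self-adjoint operator. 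Everything else is a one-line contraction estimate.
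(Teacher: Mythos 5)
Your proof is correct and follows essentially the same route as the paper's own proof: both invoke \cref{lem:lsif} to write $r_\eta = (\covPZZ + \eta I)^{-1}\covPZZ\, r_0$, diagonalize in the eigenbasis $(\nu_k, e_k)$ of $\covPZZ$, and bound each filter factor $\nu_k/(\nu_k+\eta)$ by $1$. The only difference is that you spell out the justifying details (positivity and self-adjointness of $\covPZZ$, validity of the expansion via $r_0 \in \calH_\calZ$) that the paper leaves implicit.
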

\begin{proof}
    Let $(\nu_k, e_k)$ be the eigenvalues and eigenfunctions of $\covPZZ$. From \cref{lem:lsif}, we have
    \begin{align*}
        \|r_\eta\|^2 &= \| ((\covPZZ + \eta I)^{-1} \covPZZ)r_0\|^2\\
        & = \sum_{k} \left(\frac{\nu_k}{\nu_k + \eta}\right)^2 \braket{r_0, e_k}^2\\
        & \leq \sum_{k} \braket{r_0, e_k}^2 = \|r_0\|^2.
    \end{align*}
\end{proof}
Now, we are ready to prove our convergence result on $\tilde{r}_\eta$.
\begin{theorem}\label{thm:stage1-rkhs-converge}
    Suppose \cref{assum:density-ratio,assum:bound-feature-norm}. Given data $\{x_i, z_i\} \sim P$ and the estimated prior embedding $\hat{m}_\Pi$, we have 
\begin{align*}
    \|\tilde{r}_\eta - r_0 \| \leq \frac{1}\eta \left(\frac{4\kappa^2_1\|r_0\|\log 2/\delta}{\sqrt{n}} + \|\hat{m}_\Pi - \expect[\Pi]{\phi(X)}\|\right) + \eta^{\beta_1}\zeta_1
\end{align*}
with probability at least $1-\delta$ for $\delta \in (0, 2/\Ne)$.
\end{theorem}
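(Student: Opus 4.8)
The plan is to follow the error decomposition already set up in the text, $\|\tilde{r}_\eta - r_0\| \leq \|\tilde{r}_\eta - r_\eta\| + \|r_\eta - r_0\|$, where the second (deterministic, bias) term equals exactly $\eta^{\beta_1}\zeta_1$ by \cref{lem:stage1-bias}, matching the last summand in the claimed bound. Everything therefore reduces to controlling the variance term $\|\tilde{r}_\eta - r_\eta\|$, the discrepancy between the regularized empirical estimator and its population counterpart.

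For this I would first rewrite the difference using a resolvent identity. Writing $A = \hatcovPZZ + \eta I$ and $B = \covPZZ + \eta I$, and using $\tilde{r}_\eta = A^{-1}\hat{m}_\Pi$, $r_\eta = B^{-1}m_\Pi$ together with $A^{-1} - B^{-1} = A^{-1}(B-A)B^{-1} = A^{-1}(\covPZZ - \hatcovPZZ)B^{-1}$ and $B^{-1}m_\Pi = r_\eta$, the difference splits cleanly as
\begin{align*}
\tilde{r}_\eta - r_\eta = A^{-1}(\hat{m}_\Pi - m_\Pi) + A^{-1}(\covPZZ - \hatcovPZZ)r_\eta .
\end{align*}
Taking norms, noting that $\hatcovPZZ$ is positive semi-definite so that $\|A^{-1}\| \leq 1/\eta$, invoking \cref{lem:stage1-norm} for $\|r_\eta\| \leq \|r_0\|$, and recalling $m_\Pi = \expect[\Pi]{\phi(X)}$, I obtain
\begin{align*}
\|\tilde{r}_\eta - r_\eta\| \leq \frac1\eta \|\hat{m}_\Pi - \expect[\Pi]{\phi(X)}\| + \frac{\|r_0\|}\eta \|\covPZZ - \hatcovPZZ\| ,
\end{align*}
which already displays the structure of the claimed first term; it remains only to bound the operator discrepancy $\|\covPZZ - \hatcovPZZ\|$.

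To do so I would invoke \cref{prop:concentration} with $\xi = \phi(Z)\otimes\phi(Z)$, viewed as a random element of the separable Hilbert--Schmidt operator space on $\calH_\calZ$, for which $\hatcovPZZ = \frac1n\sum_i \xi_i$ and $\covPZZ = \expect[P]{\xi}$. By \cref{assum:bound-feature-norm} we have $\|\xi\| = \|\phi(Z)\|^2 \leq \kappa_1^2 =: M$ and $\expect{\|\xi\|^2} \leq \kappa_1^4$, so the proposition yields, with probability at least $1-\delta$,
\begin{align*}
\|\covPZZ - \hatcovPZZ\| \leq \frac{2\kappa_1^2\log 2/\delta}{n} + \sqrt{\frac{2\kappa_1^4\log 2/\delta}{n}} .
\end{align*}

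The final step, and the only place requiring real care, is the simplification of this two-term bound into the single clean factor $4\kappa_1^2\log(2/\delta)/\sqrt{n}$. This is precisely where the restriction $\delta \in (0, 2/\Ne)$ enters: it guarantees $\log(2/\delta) > 1$, so that $\log(2/\delta)/n \leq \log(2/\delta)/\sqrt{n}$ and $\sqrt{\log(2/\delta)} \leq \log(2/\delta)$, whence each summand is dominated by a multiple of $\log(2/\delta)/\sqrt{n}$ with total constant $2 + \sqrt{2} < 4$. Substituting the resulting bound on $\|\covPZZ - \hatcovPZZ\|$ back into the variance estimate and adding the bias term $\eta^{\beta_1}\zeta_1$ completes the proof. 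The individual manipulations are routine; the points to get right are the resolvent splitting (so that no term loses or gains a spurious factor of $1/\eta$) and the correct identification of the bounding constants $M$ and $\expect{\|\xi\|^2}$ for the operator-valued concentration.
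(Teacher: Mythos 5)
Your proof is correct and follows essentially the same route as the paper: the same bias/variance decomposition, \cref{lem:stage1-bias} for the bias, \cref{lem:stage1-norm} to replace $\|r_\eta\|$ by $\|r_0\|$, the resolvent bound $\|(\hatcovPZZ+\eta I)^{-1}\|\leq 1/\eta$, \cref{prop:concentration}, and the observation that $\log(2/\delta)\geq 1$ for $\delta\in(0,2/\Ne)$ to absorb both concentration terms into the constant $4$. The only (harmless) variation is that you apply the concentration inequality to the Hilbert--Schmidt-valued variable $\xi=\phi(Z)\otimes\phi(Z)$ and then factor out $\|r_\eta\|\leq\|r_0\|$, whereas the paper applies it directly to the $\calH_\calZ$-valued variable $\xi=(\phi(Z)\otimes\phi(Z))r_\eta$; both give identical constants, and your resolvent identity reproduces exactly the paper's intermediate expression.
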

\begin{proof}
    We decompose the error as 
    \begin{align*}
         \|\tilde{r}_\eta - r_0\| \leq  \|\tilde{r}_\eta - r_\eta\| + \|r_\eta - r_0\|.
    \end{align*}
    From \cref{lem:stage1-bias}, we have $\|r_\eta - r_0\| \leq \eta^{\beta_1}\zeta_1$. For the first term, we have
    \begin{align*}
        \left\| r_\eta - \tilde{r}_\eta \right\| &= \|(\hatcovPZZ + \eta I)^{-1}(\hatcovPZZ r_\eta + \eta r_\eta - \hat{m}_\Pi)\|\\
        & \leq \|(\hatcovPZZ + \eta I)^{-1}\|\|\hatcovPZZ r_\eta + (m_\Pi - \covPZZ r_\eta) - \hat{m}_\Pi)\|\\
        &\leq \frac1\eta \left(\|(\hatcovPZZ - \covPZZ)r_\eta\| +\|\hat{m}_\Pi - \expect[\Pi]{\phi(X)}\|\right).
    \end{align*}
    By applying \cref{prop:concentration} with $\xi = (\phi(Z) \otimes \phi(Z))r_\eta$, we have
    \begin{align*}
        \|(\hatcovPZZ - \covPZZ)r_\eta\| \leq \frac{2\kappa^2_1\|r_0\|\log(2/\delta)}{n} + \sqrt{\frac{2\kappa^4_1\|r_0\|^2\log(2/\delta)}{n}} 
    \end{align*}
    with probability $1-\delta$ since
    \begin{align*}
        &\|\xi\| \leq \|\phi(Z) \otimes \phi(Z)\|\|r_\eta\| \leq \kappa_1^2 \|r_\eta\| \leq \kappa_1^2 \|r_0\|,\\
        &\expect{\|\xi\|^2} \leq \kappa^4_1 \|r_\eta\|^2 \leq \kappa^4_1 \|r_0\|^2
    \end{align*}
    from \cref{lem:stage1-norm}. Since $2\log(2/\delta) \geq \sqrt{2}\log(2/\delta)$ for $\delta \in (0, 2/\Ne)$, we finally obtain 
    \begin{align*}
    \|\tilde{r}_\eta - r_0 \| \leq \frac{1}\eta \left(\frac{4\kappa^2_1\|r_0\|\log 2/\delta}{\sqrt{n}} + \|\hat{m}_\Pi - \expect[\Pi]{\phi(X)}\|\right) + \eta^{\beta_1}\zeta_1.
\end{align*}
\end{proof}

Given \cref{thm:stage1-rkhs-converge}, \cref{thm:stage1-converge} is now easy to prove.
\begin{proof}[Proof of \cref{thm:stage1-converge}]
    Note that we have
    \begin{align*}
        \|\hat{r} - r_0(z)\|_\infty & = \max_{z\in\calZ} |\max(\tilde{r}_\eta(z), 0) - r_0(z)|\\
        & \leq \max_{z\in\calZ} |\tilde{r}_\eta(z) - r_0(z)|\\
        &\leq \|\tilde{r}_\eta- r_0\| \max_{z\in\calZ} \|\phi(z)\| \\
        &\leq \kappa_1\|\tilde{r}_\eta- r_0\|,
    \end{align*}
    where the first inequality uses the fact that density ratio $r_0$ is non-negative. From the upper-bound in \cref{thm:stage1-rkhs-converge}, we obtain
    \begin{align*}
        \|\hat{r} - r_0(z)\|_\infty &\leq \kappa_1\|\tilde{r}_\eta- r_0\|\\
        &\leq \frac{\kappa}\eta \left(\frac{4\kappa^2_1\|r_0\|\log 2/\delta}{\sqrt{n}} +\|\hat{m}_\Pi - \expect[\Pi]{\phi(X)}\|\right) + \eta^{\beta_1}\kappa_1\zeta_1.
    \end{align*}
    with probability $1-\delta$ for $\delta \in (0, 2/\Ne)$. Hence, if $\|\hat{m}_\Pi - \expect[\Pi]{\phi(X)}\|\leq O_p(n^{-\alpha_1})$, by setting $\eta = O(n^{-\frac{\alpha_1}{\beta_1 + 1}})$, we have 
    \begin{align*}
         \|\hat{r} - r_0(z)\|_\infty \leq O_p(n^{-\frac{\alpha_1\beta_1}{\beta_1 + 1}}).
    \end{align*}
\end{proof}

Furthermore, we can show the following convergence result on the covariance operator as follows.
\begin{proof}[Proof of \cref{thm:stage1-converge-operator}]
    From \cref{thm:stage1-converge}, with probability $1-\delta$ for $\delta \in (0, 2/\Ne)$, we have 
    \begin{align*}
        \max_i |r_i - r_0(z_i)|\leq \|\hat{r} - r_0\|_\infty 
        &\leq \frac{\kappa_1}\eta \left(\frac{4\kappa^2_1\|r_0\|\log 2/\delta}{\sqrt{n}} + \|\hat{m}_\Pi - \expect[\Pi]{\phi(X)}\|\right) + \eta^{\beta_1}\kappa_1\zeta_1.
    \end{align*}
    Let $\tilcovQXX$ be the empirical covariance operator with true density ratio $r_0$:
    \begin{align*}
        \tilcovQXX=\sum_{i=1}^n r_0(z_i)\psi(x_i)\otimes\psi(x_i).
    \end{align*}
    Then, we have
    \begin{align*}
            \|\hatcovQXX - \covQXX\| \leq \|\hatcovQXX - \tilcovQXX\| + \|\covQXX - \tilcovQXX\|.
    \end{align*}
    For the first term, we have 
    \begin{align*}
        \|\hatcovQXX - \tilcovQXX\| &\leq \frac1n \sum_i |r_i - r_0(z_i)|\|\psi(x_i) \otimes \psi(x_i)\|\\
        &\leq \frac{\kappa_1\kappa^2_2}\eta \left(\frac{4\kappa^2_1\|r_0\|\log 2/\delta}{\sqrt{n}} + \|\hat{m}_\Pi - \expect[\Pi]{\phi(X)}\|\right) + \eta^{\beta_1}\kappa_1\kappa^2_2\zeta_1.
    \end{align*}
    For the second term, by applying \cref{prop:concentration} with $\xi = r_0(Z)(\psi(X) \otimes \psi(X))$, we have
    \begin{align*}
        \|\covQXX - \tilcovQXX\| &\leq \frac{2 \kappa_1 \kappa_2^2\|r_0\|\log 2/\delta}{n} + \sqrt{\frac{2(\kappa_1 \kappa_2^2\|r_0\|)^2\log 2/\delta}{n}}\\
        &\leq \frac{4 \kappa_1 \kappa_2^2\|r_0\|\log 2/\delta}{\sqrt{n}}
    \end{align*}
    with probability $1-\delta$ for $\delta \in (0, 2/\Ne)$ since
    \begin{alignat*}{3}
        &\|\xi\| &&= |r_0|\|\psi(X) \otimes \psi(X)\|\\
        &&&\leq \kappa_1 \kappa_2^2\|r_0\|,\\
        &\expect{\|\xi\|^2} &&\leq (\kappa_1 \kappa_2^2\|r_0\|)^2.
    \end{alignat*}
    Hence, we have 
    \begin{align*}
        \|\hatcovQXX - \covQXX\| &\leq \|\hatcovQXX - \tilcovQXX\| + \|\covQXX - \tilcovQXX\|\\
        &\leq \frac{\kappa_1\kappa^2_2}\eta \left(\frac{4\kappa^2_1\|r_0\|\log 2/\delta}{\sqrt{n}} + \|\hat{m}_\Pi - \expect[\Pi]{\phi(X)}\|\right) + \eta^{\beta_1}\kappa_1\kappa^2_2\zeta_1 + \frac{4 \kappa_1 \kappa_2^2\|r_0\|\log 2/\delta}{\sqrt{n}}
    \end{align*}
    with probability $1-2\delta$ for $\delta \in (0, 2/\Ne)$. Thus, if $\|\hat{m}_\Pi - \expect[\Pi]{\phi(X)}\| \leq O_p(n^{-\alpha_1})$, by setting $\eta = O(n^{-\frac{\alpha_1}{\beta_1 + 1}})$, we have 
    \begin{align*}
         \|\hatcovQXX - \covQXX\| \leq O_p(n^{-\frac{\alpha_1\beta_1}{\beta_1 + 1}}).
    \end{align*}
\end{proof}

\subsubsection{Proof of \cref{thm:stage2-converge}}
Next, we derive \cref{thm:stage2-converge}. Again, we define the operator $\regEQ$ which replaces the empirical estimates in $\hatregEQ$ by their population versions.
\begin{align*}
    \regEQ = (\covQXX + \lambda I)^{-1}\covQXZ
\end{align*}
By following a similar approach as in \cref{lem:stage1-bias}, we obtain the following result.
\begin{lemma}[Theorem 6 in \citep{Singh2019KIV}]\label{lem:stage2-bias}
    Suppose \cref{assum:covariance-operator}, then we have 
    \begin{align*}
        \|\regEQ - \EQ\| \leq \lambda^{\beta_2}\zeta_2.
    \end{align*}
\end{lemma}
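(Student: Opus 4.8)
The plan is to mirror the bias analysis of \cref{lem:stage1-bias}, but now at the level of operators rather than of the density ratio. The starting point is the population normal equation characterizing $\EQ$ as the minimizer of $\lossQ$, namely $\covQXZ = \covQXX\,\EQ$. Substituting this into the definition $\regEQ = (\covQXX + \lambda I)^{-1}\covQXZ$ and using the elementary operator identity $(C + \lambda I)^{-1}C - I = -\lambda (C + \lambda I)^{-1}$ with $C = \covQXX$, I obtain
\begin{align*}
    \regEQ - \EQ = \bigl[(\covQXX + \lambda I)^{-1}\covQXX - I\bigr]\EQ = -\lambda\,(\covQXX + \lambda I)^{-1}\EQ.
\end{align*}
Next I would invoke the source condition of \cref{assum:covariance-operator}, $\EQ = (\covQXX)^{\beta_2}\Gamma$ with $\|\Gamma\| \leq \zeta_2$, giving
\begin{align*}
    \regEQ - \EQ = -\lambda\,(\covQXX + \lambda I)^{-1}(\covQXX)^{\beta_2}\Gamma.
\end{align*}

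Since $(\covQXX + \lambda I)^{-1}(\covQXX)^{\beta_2}$ is a bounded operator acting on the $\calH_\calX$ factor of the Hilbert--Schmidt operator $\Gamma$, the ideal property of the Hilbert--Schmidt norm (multiplying a Hilbert--Schmidt operator by a bounded operator increases the norm by at most the operator-norm factor) yields
\begin{align*}
    \|\regEQ - \EQ\| \leq \lambda\,\bigl\|(\covQXX + \lambda I)^{-1}(\covQXX)^{\beta_2}\bigr\|_{\mathrm{op}}\,\zeta_2.
\end{align*}
It then remains only to control the scalar spectral quantity. Diagonalizing the self-adjoint positive operator $\covQXX$ with eigenvalues $\sigma_k \geq 0$ reduces the operator norm to $\sup_{\sigma \geq 0}\sigma^{\beta_2}/(\sigma + \lambda)$, and the bound
\begin{align*}
    \frac{\sigma^{\beta_2}}{\sigma + \lambda} = \Bigl(\frac{\sigma}{\sigma + \lambda}\Bigr)^{\beta_2}(\sigma + \lambda)^{\beta_2 - 1} \leq \lambda^{\beta_2 - 1},
\end{align*}
which is valid for $\beta_2 \in (0,1/2] \subset [0,1]$ because the first factor is at most $1$ and $\sigma + \lambda \geq \lambda$ is raised to a nonpositive power, gives $\|(\covQXX + \lambda I)^{-1}(\covQXX)^{\beta_2}\|_{\mathrm{op}} \leq \lambda^{\beta_2 - 1}$. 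Combining the two displays produces the claimed bound $\|\regEQ - \EQ\| \leq \lambda\cdot\lambda^{\beta_2 - 1}\zeta_2 = \lambda^{\beta_2}\zeta_2$.

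All the computations are essentially routine, so there is no substantial obstacle; the only points deserving care are the bookkeeping ones. First, one must justify $\covQXZ = \covQXX\,\EQ$ as the population first-order condition for $\EQ$, rather than writing the (possibly unbounded) $(\covQXX)^{-1}$ explicitly. Second, one must be mindful that $\regEQ - \EQ$ is measured in the Hilbert--Schmidt norm used for operators in $\calH_\calX\otimes\calH_\calZ$, while the spectral bound $\lambda^{\beta_2 - 1}$ is an operator-norm estimate, so the passage between the two relies precisely on the Hilbert--Schmidt ideal property noted above. No concentration or probabilistic argument enters, as this lemma concerns only the deterministic bias of the regularized population operator.
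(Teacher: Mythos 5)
Your proof is correct and follows essentially the same route as the paper: the paper establishes \cref{lem:stage2-bias} by ``the same approach as \cref{lem:stage1-bias}'' (eigendecomposition of the covariance operator, the factorization trick $\frac{\sigma^{\beta_2}}{\sigma+\lambda} = \bigl(\frac{\sigma}{\sigma+\lambda}\bigr)^{\beta_2}(\sigma+\lambda)^{\beta_2-1} \le \lambda^{\beta_2-1}$, and the source condition of \cref{assum:covariance-operator}), and your derivation via $\regEQ - \EQ = -\lambda(\covQXX+\lambda I)^{-1}(\covQXX)^{\beta_2}\Gamma$ is exactly that spectral argument written in operator-calculus form. The only cosmetic difference is that you make the Hilbert--Schmidt ideal property explicit, which the paper leaves implicit in its reduction to the stage-1 lemma.
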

Using a proof similar to the one of \cref{lem:stage1-norm}, we also have
\begin{lemma}\label{lem:stage2-norm}
    $\|\regEQ\| \leq \|\EQ\|$.
\end{lemma}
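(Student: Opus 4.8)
The plan is to mirror the proof of \cref{lem:stage1-norm}, exploiting the fact that the regularized operator differs from $\EQ$ only by left-multiplication with a contraction. First I would use the normal equations characterizing the minimizer $\EQ$ of $\lossQ$, namely $\covQXX\EQ = \covQXZ$, to rewrite
\begin{align*}
    \regEQ = (\covQXX + \lambda I)^{-1}\covQXZ = (\covQXX + \lambda I)^{-1}\covQXX\,\EQ.
\end{align*}
The key observation is that the prefactor $T_\lambda := (\covQXX + \lambda I)^{-1}\covQXX$ is a self-adjoint positive operator on $\calH_\calX$ whose eigenvalues are $\mu_k/(\mu_k + \lambda) \in [0,1)$, where $(\mu_k)$ are the eigenvalues of $\covQXX$; hence $\|T_\lambda\|_{\mathrm{op}} \leq 1$.

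Next I would expand $\EQ$, viewed as a Hilbert--Schmidt element of $\calH_\calX \otimes \calH_\calZ$, in the eigenbasis $(u_k)$ of $\covQXX$ as $\EQ = \sum_k u_k \otimes w_k$ with $w_k \in \calH_\calZ$ and $\|\EQ\|^2 = \sum_k \|w_k\|^2$. Since $T_\lambda$ acts only on the $\calH_\calX$ factor, we have $\regEQ = T_\lambda \EQ = \sum_k \tfrac{\mu_k}{\mu_k + \lambda}\, u_k \otimes w_k$, and therefore
\begin{align*}
    \|\regEQ\|^2 = \sum_k \left(\frac{\mu_k}{\mu_k + \lambda}\right)^2 \|w_k\|^2 \leq \sum_k \|w_k\|^2 = \|\EQ\|^2,
\end{align*}
which gives the claim. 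Equivalently, and more compactly, the submultiplicativity of the Hilbert--Schmidt norm under left-multiplication by a bounded operator, $\|T_\lambda \EQ\| \leq \|T_\lambda\|_{\mathrm{op}}\|\EQ\|$, combined with $\|T_\lambda\|_{\mathrm{op}} \leq 1$, yields the bound immediately.

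I expect the only real care needed is the bookkeeping for the tensor-product structure: unlike in \cref{lem:stage1-norm}, where $r_\eta$ lives in a single RKHS and $\covPZZ$ acts on it directly, here the contracting operator $T_\lambda$ acts on the left ($\calH_\calX$) factor of the Hilbert--Schmidt operator $\EQ$, so one must confirm that this left-action leaves the $\calH_\calZ$ factor untouched and that the Hilbert--Schmidt norm decomposes accordingly. Once this is set up, the spectral contraction argument is verbatim the scalar computation used for $\|r_\eta\| \leq \|r_0\|$.
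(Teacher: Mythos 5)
Your proposal is correct and matches the paper's intent exactly: the paper proves this lemma by the same spectral contraction argument as \cref{lem:stage1-norm}, using the normal equation $\covQXX\EQ = \covQXZ$ to write $\regEQ = (\covQXX+\lambda I)^{-1}\covQXX\,\EQ$ and bounding the eigenvalue factors $\mu_k/(\mu_k+\lambda)$ by $1$. Your extra care about the Hilbert--Schmidt bookkeeping (the contraction acting only on the $\calH_\calX$ factor) is a faithful elaboration of what the paper leaves implicit, not a different route.
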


Given these lemmas, we can prove \cref{thm:stage2-converge} as follows. 
\begin{proof}[Proof of \cref{thm:stage2-converge}]
    We decompose the error as follows:
    \begin{align*}
        \|\EQ - \hatregEQ\| \leq \|\EQ - \regEQ\| + \|\regEQ - \hatregEQ\|.
    \end{align*}
    \cref{lem:stage2-bias} bounds the first term as 
     \begin{align*}
        \|\regEQ - \EQ\| \leq \lambda^{\beta_2}\zeta_2.
    \end{align*}
    The second term can be bounded as follows 
    \begin{align*}
       \|\regEQ - \hatregEQ\|  &= \| (\hatcovQXX + {\lambda}I )^{-1}(\hatcovQXX \regEQ + {\lambda} \regEQ - \hatcovQXZ)\|\\
    & \leq \| (\hatcovQXX + {\lambda} I )^{-1}\|\|(\hatcovQXX \regEQ + {\lambda} \regEQ - \hatcovQXZ)\| \\
    & \leq \frac1\lambda \|(\hatcovQXX \regEQ + {\lambda} \regEQ - \hatcovQXZ)\| \\
    & = \frac1\lambda \|(\hatcovQXX \regEQ + (\covQXZ - \covQXX\regEQ) - \hatcovQXZ)\|\\
    & = \frac1\lambda \|(\hatcovQXX- \covQXX) \regEQ - (\hatcovQXZ - \covQXZ)\|\\
    & = \frac1\lambda (\|\hatcovQXX- \covQXX)\|\|\regEQ\| + \|\hatcovQXZ- \covQXZ\|)\\
    & = \frac1\lambda (\|\hatcovQXX- \covQXX)\|\|\EQ\| + \|\hatcovQXZ- \covQXZ\|)
    \end{align*}
    where the last inequality uses \cref{lem:stage2-norm}. Therefore, we have 
    \begin{align*}
        \|\EQ - \hatregEQ\| \leq \frac1\lambda (\|\hatcovQXX- \covQXX\|\|\EQ\| + \|\hatcovQXZ- \covQXZ\|) + \lambda^{\beta_2}\zeta_2.
    \end{align*}
    Hence, if $\|\hatcovQXX- \covQXX\| \leq O_p(n^{-\alpha_2})$ and $\|\hatcovQXZ- \covQXZ\| \leq O_p(n^{-\alpha_2})$ , we can see 
    \begin{align*}
        \|\EQ - \hatregEQ\| \leq O_p(n^{-\frac{\alpha_2\beta_2}{\beta_2 + 1}})
    \end{align*}
    by setting $\lambda = O(n^{-\frac{\alpha_2}{\beta_2 + 1}})$.
\end{proof}

\subsection{Proof for Adaptive Features} \label{subsec:proof-adaptive-feature}
Here, we derive the loss function $\ell(\theta)$ for adaptive feature. Let $\Phi = [\phi(z_1), \dots, \phi(z_n)]$ be the feature map for $Z$. Then the loss $\hatlossQ$ can be written as 
\begin{align*}
    \hatlossQ(E, \theta) &= \mathrm{tr}\left(\left(\Phi - E^*\Psi_\theta\right) D\left(\Phi - E^*\Psi_\theta\right)^\top \right) + \lambda \|E\|^2\\
    &= \mathrm{tr}\left(\Phi D \Phi^\top - 2E^*\Psi_\theta D \Phi^\top + E^*\Psi_\theta D \Psi_\theta ^\top E\right) + \lambda \|E\|^2.
\end{align*}
Since for fixed $\theta$, the minimizer of $\hatlossQ$ with respect to $E$ can be written as 
\begin{align*}
    \hatregEQ =  (\Psi_\theta D \Psi_\theta^\top + \lambda I)^{-1} \Psi_\theta D \Phi^\top,
\end{align*}
we have
\begin{align*}
    \hatlossQ(\hatregEQ, \theta) &= \mathrm{tr}\left(\Phi D \Phi^\top \right) -2 \mathrm{tr}\left(\Phi D \Psi_\theta^\top(\Psi_\theta D \Psi_\theta^\top + \lambda I)^{-1}\Psi_\theta D \Phi^\top\right)\\
    &\quad \quad +\mathrm{tr}\left( \Phi D \Psi_\theta^\top(\Psi_\theta D \Psi_\theta^\top + \lambda I)^{-1}\Psi_\theta D \Psi_\theta^\top (\Psi_\theta D \Psi_\theta^\top + \lambda I)^{-1} \Psi_\theta D \Phi^\top\right)\\
    &\quad \quad +\mathrm{tr}\left( \lambda \Phi^\top D \Psi_\theta(\Psi_\theta^\top D \Psi_\theta + \lambda I)^{-1} (\Psi_\theta^\top D \Psi_\theta + \lambda I)^{-1} \Psi_\theta^\top D \Phi\right)\\
    & = \mathrm{tr}\left(\Phi D \Phi^\top \right) - \mathrm{tr}\left(\Phi D \Psi_\theta^\top(\Psi_\theta D \Psi_\theta^\top + \lambda I)^{-1}\Psi_\theta D \Phi^\top\right).
\end{align*}
Using $G_Z = \Phi^\top \Phi$, we have
\begin{align*}
    \ell(\theta) &= \hatlossQ(\hatregEQ, \theta)\\
    &=  \mathrm{tr}\left(G D\right) - \mathrm{tr}\left(\Phi D \Psi_\theta^\top(GD \Psi_\theta^\top + \lambda I)^{-1}\Psi_\theta D \Phi^\top\right)\\
    &=  \mathrm{tr}\left(G D -G D \Psi_\theta^\top(\Psi_\theta D \Psi_\theta^\top + \lambda I)^{-1}\Psi_\theta D \right).
\end{align*}

\end{document}